\documentclass[letterpaper]{article} % DO NOT CHANGE THIS
\usepackage{pdfpages}
\usepackage{aaai23}  % DO NOT CHANGE THIS
\usepackage{times}  % DO NOT CHANGE THIS
\usepackage{helvet}  % DO NOT CHANGE THIS
\usepackage{courier}  % DO NOT CHANGE THIS
\usepackage[hyphens]{url}  % DO NOT CHANGE THIS
\usepackage{graphicx} % DO NOT CHANGE THIS
\urlstyle{rm} % DO NOT CHANGE THIS
  % DO NOT CHANGE THIS
\usepackage{natbib}  % DO NOT CHANGE THIS AND DO NOT ADD ANY OPTIONS TO IT
\usepackage{caption} % DO NOT CHANGE THIS AND DO NOT ADD ANY OPTIONS TO IT
\frenchspacing  % DO NOT CHANGE THIS
\setlength{\pdfpagewidth}{8.5in}  % DO NOT CHANGE THIS
\setlength{\pdfpageheight}{11in}  % DO NOT CHANGE THIS
%
% These are recommended to typeset algorithms but not required. See the subsubsection on algorithms. Remove them if you don't have algorithms in your paper.
\usepackage{algorithm}
\usepackage{algorithmic}

%
% These are are recommended to typeset listings but not required. See the subsubsection on listing. Remove this block if you don't have listings in your paper.
\usepackage{newfloat}
\usepackage{listings}
\DeclareCaptionStyle{ruled}{labelfont=normalfont,labelsep=colon,strut=off} % DO NOT CHANGE THIS
\lstset{%
	basicstyle={\footnotesize\ttfamily},% footnotesize acceptable for monospace
	numbers=left,numberstyle=\footnotesize,xleftmargin=2em,% show line numbers, remove this entire line if you don't want the numbers.
	aboveskip=0pt,belowskip=0pt,%
	showstringspaces=false,tabsize=2,breaklines=true}
\floatstyle{ruled}
\newfloat{listing}{tb}{lst}{}
\floatname{listing}{Listing}
%
% Keep the \pdfinfo as shown here. There's no need
% for you to add the /Title and /Author tags.
\pdfinfo{
/TemplateVersion (2023.1)
}

%\usepackage[margin=0.9in]{geometry}
%\IEEEoverridecommandlockouts
% The preceding line is only needed to identify funding in the first footnote. If that is unneeded, please comment it out.
\usepackage{amsmath,amssymb,amsfonts}
\usepackage{graphicx}

\usepackage{bm} % math bold 
\usepackage{microtype}
%\usepackage{neurips_2021}
%% Neurips
\usepackage[utf8]{inputenc} % allow utf-8 input
\usepackage{url}            % simple URL typesetting
\usepackage{nicefrac}       % compact symbols for 1/2, etc.
%\usepackage{microtype}      % microtypography
%\usepackage{xcolor}         % colors
%\usepackage{sectsty}
%%
% Recommended, but optional, packages for figures and better typesetting:
\usepackage{microtype,mathrsfs}
\usepackage{subfigure}
\usepackage{caption}
\usepackage{breqn} % used for dmath
\usepackage{enumitem}

\usepackage{cite}

%\sectionfont{\fontsize{12}{15}\selectfont}
%\subsectionfont{\fontsize{10}{15}\selectfont}

\definecolor{darkred}{RGB}{230,0,0}
\definecolor{darkgreen}{RGB}{0,150,0}
\definecolor{darkblue}{RGB}{0,0,150}
\usepackage{amsthm}

\makeatletter
\newcommand*{\rom}[1]{\expandafter\@slowromancap\romannumeral #1@}
\makeatother

%%% ALLERTON 2019

\newcommand{\mathleft}{\@fleqntrue\@mathmargin0pt}
\newcommand{\mathcenter}{\@fleqnfalse}

\makeatletter
\newcommand{\ssymbol}[1]{^{\@fnsymbol{#1}}}
\makeatother
%%% Proximal and Moreau

% derivatives

\newcommand{\R}{\mathbb{R}}

%% Greek

\newtheorem{defn}{Definition}

% starred

% hats, tildes

%%%%%%%%%%%%%%%%%%%%%%%%%%%%%%%%%%%%%%%%%%%%%%%%%%%%%%

%\DeclarePairedDelimiterX{\inp}[2]{\langle}{\rangle}{#1, #2}

% ********** ICASSP ***********

%\usepackage{dsfont}

%\newcommand{\sigg}{\sigma^2}
%

%\newcommand{\simiid}{\stackrel{\text{\small{iid}}}{\sim}}

% new 

%\newcommand{\Id}{\mathrm{Id}}

% PROX

% overline

% theorems
%\newtheorem{innercustompropo}{Proposition}
%\newenvironment{custompropo}[1]
%  {\renewcommand\theinnercustompropo{#1}\innercustompropo}
%  {\endinnercustompropo}
\theoremstyle{theorem}
\newtheorem{ass}{Assumption}
\theoremstyle{remark}
\newtheorem{remark}{Remark}%[subsection]
%%\newtheorem{fact}{Fact}[section]
%%%\newtheorem{example}{Example}[section]
%%
%%
%\newtheorem{innercustomrem}{Remark}
%\newenvironment{customrem}[1]
%  {\renewcommand\theinnercustomrem{#1}\innercustomrem}
%  {\endinnercustomrem}
%\theoremstyle{definition}
%\newtheorem{defn}{Definition}[section]
%\newtheorem{appdefn}{Definition}[subsection]
%\newtheorem{prop}{Property}[section]
%\newtheorem{pro}{Problem}[section]

%\DeclarePairedDelimiter\floor{\lfloor}{\rfloor}

% bib

% Non-linear

\newcommand{\sign}{\texttt{sign}}

               % expectation
\newcommand{\E}{\mathbb{E}}                    % expectation
                     % lambda

\newcommand{\nn}{\notag}

%bold upper
%\newcommand{\M}{\mathbf{M}}
%\newcommand{\Z}{{Z}}
%\newcommand{\W}{\mathbf{W}}
%\newcommand{\Ub}{\mathbf{U}}
%\newcommand{\Gb}{\mathbf{G}}
%\newcommand{\Hb}{\mathbf{H}}
%\newcommand{\G}{\mathbf{G}}
%\newcommand{\Sb}{\mathbf{S}}
%\newcommand{\X}{\mathbf{X}}
%\newcommand{\XXb}{\overline{\mathbf{X}}}
%\newcommand{\A}{\mathbf{A}}
%\newcommand{\Lb}{\mathbf{L}}
%\newcommand{\Y}{\mathbf{Y}}
%\newcommand{\Vb}{\mathbf{V}}
%\newcommand{\Sb}{\mathbf{S}}

%\newcommand{\G}{{G}}

%Bold lower

%\newcommand{\tb}{\mathbf{t}}

%Calligraphic

%\newcommand{\Ic}{\mathcal{I}}

%hat, tilde, etc

%\newcommand{\alh}{\hat{\al}}

%...

%Equations
\newcommand{\beq}{\begin{equation}}
\newcommand{\eeq}{\end{equation}}
\newcommand{\bea}{\begin{align}}
\newcommand{\eea}{\end{align}}
\newcommand{\beas}{\begin{align*}}
\newcommand{\eeas}{\end{align*}}

% Misc

%\newcommand{\widesim}[2][1.5]{
 % \mathrel{\overset{#2}{\scalebox{#1}[1]{$\sim$}}}}

\def\bea#1\eea{\begin{align}#1\end{align}}

\theoremstyle{plain}
\newtheorem{theorem}{Theorem}
\newtheorem{corollary}{Corollary}[theorem]
\newtheorem{lemma}[theorem]{Lemma}
\newtheorem{proposition}[theorem]{Proposition}
   \title{Fast Convergence in Learning Two-Layer Neural Networks \\with Separable Data}

\author {
    % Authors
    Hossein Taheri\textsuperscript{\rm 1},\;
    Christos Thrampoulidis\textsuperscript{\rm 2}
}
\affiliations {
    % Affiliations
   \textsuperscript{\rm 1}University of California, Santa Barbara\\
    \textsuperscript{\rm 2}University of British Columbia\\
    hossein@ucsb.edu, cthrampo@ece.ubc.ca
}

\begin{document}

\setcounter{secnumdepth}{2} 

\maketitle
%%%%%%%%%%%%%%%%%%%%%%%%%%% ABSTRACT %%%%%%%%%%%%%%%%%%%%%%%%%%
\begin{abstract}
Normalized gradient descent has shown substantial success in speeding up the convergence of  exponentially-tailed loss functions (which includes exponential and logistic losses) on linear classifiers with separable data.  In this paper, we go beyond linear models by studying normalized GD on two-layer neural nets. We prove for exponentially-tailed losses that using normalized GD leads to linear rate of convergence of the training loss to the global optimum if the iterates find an interpolating model. This is made possible by showing certain gradient self-boundedness conditions and a log-Lipschitzness property. We also study generalization of normalized GD for convex objectives via an algorithmic-stability analysis. In particular, we show that normalized GD does not overfit during training by establishing finite-time generalization bounds. 
\end{abstract}
%%%%%%%%%%%%%%%%%%%%%%%%%%%%% INTRODUCTION %%%%%%%%%%%%%%%%%%%%%
\section{Introduction}
\subsection{Motivation}
\par
A wide variety of machine learning algorithms for classification tasks rely on learning a model using monotonically decreasing loss functions such as logistic loss or exponential loss. In modern practice these tasks are often accomplished using over-parameterized models such as large neural networks where the model can interpolate the training data, i.e., it can achieve perfect classification accuracy on the samples.  In particular, it is often the case that the training of the model is continued until achieving approximately zero training loss \cite{zhang2021understanding}. 
\par
Over the last decade there has been remarkable progress in understanding or improving the convergence and generalization properties of over-parameterized models trained by various choices of loss functions including logistic loss and quadratic loss. For the quadratic loss it has been shown that over-parameterization can result in significant improvements in the training convergence rate of (stochastic)gradient descent on empirical risk minimization algorithms. Notably, quadratic loss on two-layer ReLU neural networks is shown to satisfy the Polyak-Łojasiewicz(PL) condition \cite{charles2018stability,bassily2018exponential,liu2022loss}. In fact, the PL property is a consequence of the observation that the tangent kernel associated with the model is a non-singular matrix. Moreover, in this case the PL parameter, which specifies the rate of convergence, is the smallest eigenvalue of the tangent kernel\cite{liu2022loss}. The fact that over-parameterized neural networks trained by quadratic loss satisfy the PL condition, guarantees that the loss convergences exponentially fast to a global optimum. The global optimum in this case is a model which ``perfectly'' interpolates the data, where we recall that perfect interpolation requires that the model output for every training input is precisely equal to the corresponding label.
\par
 On the other hand, gradient descent using un-regularized logistic regression with linear models and separable data is biased toward the max-margin solution. In particular, in this case the parameter converges in direction with the rate $O({1/\log(t)})$ to the solution of hard margin SVM problem, while the training loss converges to zero at the rate $\tilde{O}(1/t)$ \cite{soudry2018implicit,ji2018risk}. More recently, normalized gradient descent has been proposed as a promising approach for fast convergence of exponentially tailed losses. In this method, at any iteration the step-size is chosen proportionally to the inverse of value of training loss function \cite{nacson2019convergence}. This results in choosing unboundedly increasing step-sizes for the iterates of gradient descent. This choice of step-size leads to significantly faster rates for the parameter's directional convergence. In particular, for linear models with separable data, it is shown that normalized GD with decaying step-size enjoys a rate of $O(1/\sqrt{t})$ in directional parameter convergence to the max-margin separator \cite{nacson2019convergence}. This has been improved to $O(1/t)$ with normalized GD using fixed step-size \cite{ji2021characterizing}.
 \par
  Despite remarkable progress in understanding the behavior of normalized GD with separable data, these results are only applicable to the implicit bias behavior of ``linear models''. In this paper, we aim to discover for the first time, the dynamics of learning a two-layer neural network with normalized GD trained on separable data. We also wish to realize the iterate-wise test error performance of this procedure. We show that using normalized GD on an exponentially-tailed loss with a two layered neural network leads to exponentially fast convergence of the loss to the global optimum. This is comparable to the convergence rate of $O(1/t)$ for the global convergence of neural networks trained with exponentially-tailed losses. Compared to the convergence analysis of standard GD which is usually carried out using smoothness of the loss function, here for normalized GD we use the Taylor's expansion of the loss and use the fact the operator norm of the Hessian is bounded by the loss. Next, we apply a lemma in our proof which shows that exponentially-tailed losses on a two-layered neural network satisfy a log-Lipschitzness condition throughout the iterates of normalized GD. Moreover, crucial to our analysis is showing that the $\ell_2$ norm of the gradient at every point is upper-bounded and lower-bounded by constant factors of the loss under given assumptions on the activation function and the training data. Subsequently, the log-Lipschitzness property together with the bounds on the norm of Gradient and Hessian of the loss function ensures that normalized GD is indeed a descent algorithm. Moreover, it results in the fact that the loss value decreases by a constant factor after each step of normalized GD, resulting in the promised geometric rate of decay for the loss. 
\subsection{Contributions}

 In Section \ref{sec:caotl} we introduce conditions --namely log-Lipschitz and self-boundedness assumptions on the gradient and the Hessian-- under which the training loss of the normalized GD algorithm converges exponentially fast to the global optimum. More importantly, in Section \ref{sec:tlnn} we prove that the aforementioned conditions are indeed satisfied by two-layer neural networks trained with an exponentially-tailed loss function if the iterates lead to an interpolating solution. This yields the first theoretical guarantee on the convergence of normalized GD for non-linear models. We also study a stochastic variant of normalized GD and investigate its training loss convergence  in Section \ref{sec:stoch}.

In Section \ref{sec:gen-err} we study, for the first time, the finite-time test loss and test error performance of normalized GD for convex objectives. In particular, we provide sufficient conditions for the generalization of normalized GD and derive bounds of order $O(1/n)$ on the expected  generalization error, where $n$ is the training-set size. 

\subsection{Prior Works}
The theoretical study of the optimization landscape of over-parameterized models trained by GD or SGD has been the subject of several recent works. The majority of these works study over-parameterized models with specific choices of loss functions, mainly quadratic or logistic loss functions. For quadratic loss, the exponential convergence rate of over-parameterized neural networks is proved in several recent works e.g., \cite{charles2018stability,bassily2018exponential,du2019gradient,allen2019convergence,arora2019fine,oymak2019overparameterized,oymak2020toward,safran2021effects,liu2022loss}. These results naturally relate to the Neural Tangent Kernel(NTK) regime of infinitely wide or sufficiently large initialized neural networks \cite{jacot2018neural} in which the iterates of gradient descent stay close to the initialization. The NTK approach can not be applied to our setting as the parameters' norm in our setting is growing as $\Theta(t)$ with the NGD updates. 
\par 
The majority of the prior results apply to the quadratic loss. However, the state-of-the-art architectures for classification tasks use unregularized ERM with logistic/exponential loss functions. Notably, for these losses over-parameterization leads to infinite norm optimizers. As a result, the objective in this case does not satisfy strong convexity or the PL condition even for linear models. The analysis of loss and parameter convergence of logistic regression on separable data has attracted significant attention in the last five years. Notably, a line of influential works have shown that gradient descent provably converges in direction to the max-margin solution for linear models and two-layer homogenous neural networks. In particular, the study of training loss and implicit bias behavior of GD on logistic/exponential loss was first initiated in the settings of linear classifiers \cite{Rosset2003MarginML,telgarsky2013margins,soudry2018implicit,ji2018risk,nacson2019convergence}. The implicit bias behavior of GD with logistic loss in two-layer neural networks was later studied by \cite{lyu2019gradient,chizat2020implicit,ji2020directional}. The loss landscape of logistic loss for over-parameterized neural networks and structured data is analyzed in \cite{zou2020gradient,chatterji2021does}, where it is proved that GD converges to a global optima at the rate $O(1/t)$. The majority of these results hold for standard GD while we focus on normalized GD. 
\par
The generalization properties of GD/SGD with binary and multi-class logistic regression is studied in \cite{shamir2021gradient,schliserman2022stability} for linear models and in \cite{li2018learning,cao2019generalization,cao2020generalization} for neural networks. Recently, \cite{taheri2023generalization} studied the generalization error of decentralized logistic regression through a stability analysis. For our generalization analysis we use an algorithmic stability analysis \cite{bousquet2002stability,hardt2016train,lei2020fine}. However, unlike these prior works we consider normalized GD and derive the first generalization analysis for this algorithm. 
\par
The benefits of normalized GD for speeding up the directional convergence of GD for linear models was suggested by \cite{nacson2019convergence,ji2021characterizing}. Our paper contributes to this line of work. Compared to the prior works which are focused on implicit behavior of linear models, we study non-linear models and derive training loss convergence rates. We also study, the generalization performance of normalized GD for convex objectives. 
\subsection*{Notation}
We use $\|\cdot\|$ to denote the operator norm of a matrix and also to denote the $\ell_2$-norm of a vector. The Frobenius norm of a matrix $W$ is shown by $\|W\|_F$. The Gradient and the Hessian of a function $F:\R^{d}\rightarrow\R$ are denoted by $\nabla F$ and $\nabla^2 F$. Similarly, for a function $F:\R^{d}\times\R^{d'}\rightarrow\R$ that takes two input variables, the Gradient and the Hessian with respect to the $i$th variable (where $i=1,2$) are denoted by $\nabla_i F$ and $\nabla^2_i F$, respectively. For functions $F,G:\R\rightarrow\R$, we write $F(t)=O(G(t))$ when $|F(t)|\le m\,G(t)$ after $t\ge t_0$ for positive constants $m,t_0$. We write $F(t)=\tilde{O}(G(t))$ when $F(t)=O(G(t)H(t))$ for a polylogarithmic function $H$. Finally, we denote $F(t) = \Theta (G(t))$ if $|F(t)| \le m_1 G(t) $ and $|F(t)|\ge m_2 G(t)$ for all $t\ge t_0$ for some positive constants $m_1,m_2,t_0$.  

\subsection{Problem Setup}
We consider unconstrained and unregularized empirical risk minimization (ERM) on $n$ samples,
\bea\label{eq:logreg}
\min_{w\in\R^{\widetilde d}} F(w):=\frac{1}{n}\sum_{i=1}^{n} f\left(y_i \Phi(w,x_i)\right).
\eea
The $i$th sample $z_i:=(x_i,y_i)$ consists of a data point $x_i\in\R^d$ and its associated label $y_i\in\{\pm 1\}$.
The function $\Phi:\R^{\widetilde d} \times \R^d\rightarrow\R$ represents the model taking the weights vector $w$ and data point $x$ to approximate the label. In this section, we take $\Phi$ as a neural network with one hidden layer and $m$ neurons, 
\bea\nn
\Phi(w,x):= \sum_{j=1}^m a_j \sigma(\langle w_j,x\rangle). 
\eea
Here $\sigma:\R\rightarrow\R$ is the activation function and $w_j\in\R^d$ denotes the input weight vector of the $j$th hidden neuron. $w\in\R^{\widetilde d}$ represents the concatenation of these weights i.e., $w=[w_1;w_2;\dots;w_m]$. In our setting the total number of parameters and hence the dimension of $w$ is $\widetilde d=md.$
We assume that only the first layer weights $w_j$ are updated during training and the second layer weights $a_j\in\R$ are initialized randomly and are maintained fixed during training.
 The function $f:\R \rightarrow\R$ is non-negative and monotonically decreases such that $\lim_{t\rightarrow +\infty} f(t) = 0.$ In this section, we focus on the exponential loss $f(t)= \exp(-t)$, but we expect that our results apply to a broader class of loss functions that behave similarly to the exponential loss for large $t$, such as logistic loss $f(t)=\log(1+\exp(-t))$. 
 
We consider activation functions with bounded absolute value for the first and second derivatives. 
\begin{ass}[Activation function] \label{ass:act}The activation function $\sigma:\R\rightarrow\R$ is smooth and for all $t\in\R$
\bea\nn
 |\sigma^{\prime\prime}(t)| \le L.
\eea 
Moreover, there are positive constants $\alpha,\ell$ such that $\sigma$ satisfies for all $t\in\R$,
\bea\nn
\alpha \le \sigma^\prime(t) \le\ell.
\eea
\end{ass}
An example satisfying the above condition is the activation function known as smoothed-leaky-ReLU which is a smoothed variant of the leaky-ReLU activation $\sigma(t) = \ell t \, \mathbb{I} (t\ge0) + \alpha t\, \mathbb{I} (t\le0)$, where $\mathbb{I}(\cdot)$ denotes the 0--1 indicator function. 
%\cite{tatro2020optimizing,chatterji2021does} where $\sigma(t) = (t^2/2\tau) \mathbb{I}(t\in[0,\tau]) + (t-\tau/2)\mathbb{I}(t>\tau)$, where $\mathbb{I}(\cdot)$ denotes the 0--1 indicator function. 
%\bea\nn
%\sigma(t):= \begin{cases}0 &  t<0, \\ \frac{t^{2}}{2 H} & t \in[0, H], \\ t-\frac{H}{2} &t>H.\end{cases}
%\eea
\par 
Throughout the paper we let $R$ and $a$ denote the maximum norm of data points and second layer weights, respectively, i.e., 
\bea\nn
R:=\max_{i\in[n]} \;\;\|x_i\|\; ,\;\;\;\;\;\; a:=\max_{j\in[m]}\;\; |a_j|\;.
\eea
Throughout the paper we assume $R=\Theta(1)$ w.r.t. problem parameters and $a=\frac{1}{m}$.
\\
We also denote the \emph{training loss} of the model by $F$, defined in \eqref{eq:logreg} and define the \emph{train error} as misclassification error over the training data, or formally by $F_{0-1} (w):= \frac{1}{n}\sum_{i=1}^n \mathbb{I}(\textsc{sign}(\Phi(w,x_i))\neq y_i)$.

\paragraph*{Normalized GD.}We consider the iterates of normalized GD as follows, 
\bea\label{eq:dec_main}
w_{t+1} =w_t - \eta_t\nabla F(w_t). 
\eea
The step size is chosen inversely proportional to the loss value i.e., $\eta_t = \eta/F(w_t)$, implying that the step-size is growing unboundedly as the algorithm approaches the optimum solution. Since the gradient norm decays  proportionally to the loss, one can equivalently choose $\eta_t=\eta/\|\nabla F(w_t)\|$.
\section{Main Results}
For convergence analysis in our case study, we introduce a few definitions.
\begin{defn}[log-Lipschitz Objective]\label{ass:gencon}
The training loss $F:\R^{\widetilde d}\rightarrow\R$ satisfies the log-Lipschitzness property if for all $w,w'\in \R^{\tilde d}$,
\bea\nn
\max _{v\in [w,w']} F(v) \le F(w) \cdot \tilde{c}_{w,w'},
\eea
where $[w,w']$ denotes the line between $w$ and $w'$ and we define $\tilde{c}_{w,w'}:=\exp\left(c(\|w-w'\|+\|w-w'\|^2)\right)$ where the positive constant $c$ is independent of $w,w'$.
\end{defn}

As we will see in the following sections, log-Lipschitzness is a property of neural networks trained with exponentially tailed losses with $c=\Theta(\frac{1}{\sqrt{m}})$. We also define the property ``log-Lipschitzness in the gradient path'' if for all $w_t,w_{t-1}$ in Eq. \eqref{eq:dec_main} there exists a constant $C$ such that, 
\bea\nn
\max _{v\in[w_t,w_{t+1}]} F(v) \le C\,F(w_t). 
\eea

%\par
%In our analysis we make the following assumption on the lower and upper-bounds for the norm of the gradient. 
\begin{defn}[Self lower-bounded gradient]\label{ass:self-lower}
The loss function $F:\R^{\widetilde d}\rightarrow\R$ satisfies the self-lower bounded Gradient condition for a function, if these exists a constant $\mu$ such that for all $w$,
\bea\nn
\| \nabla F(w)\| \ge \mu\, F(w).
\eea
\end{defn}
\begin{defn}[Self-boundedness of the gradient]\label{ass:self-bounded}
The loss function $F:\R^{\widetilde d}\rightarrow\R$ satisfies the self-boundedness of the gradient condition for a constant $h$, if for all $w$ 
\bea\nn
\| \nabla F(w)\| \le h\, F(w).
\eea
\end{defn}

The above two conditions on the upper-bound and lower bound of the gradient norm based on loss can be thought as the equivalent properties of smoothness and the PL condition but for our studied case of exponential loss. To see this, note that smoothness and PL condition provide upper and lower bounds for the square norm of gradient. In particular, by $L$-smoothness one can deduce that $\|\nabla F(w)\|^2 \le 2L(F(w)-F^\star)$ (e.g., \cite{nesterov2003introductory}) and by the definition of $\mu$-PL condition $\|\nabla F(w)\|^2\ge 2\mu (F(w)-F^\star)$ \cite{Polyak1963GradientMF,Lojasiewicz}. 
\par
The next necessary condition is an upper-bound on the operator norm of the Hessian of loss. 
\begin{defn}[Self-boundedness of the Hessian]\label{ass:laplace}
The loss function $F:\R^{\widetilde d}\rightarrow\R$ satisfies the self-boundedness of the Hessian property for a constant $H$, if for all $w$, 
\bea\nn
\| \nabla^2 F(w)\| \le H\,F(w),
\eea
where $\|\cdot\|$ denotes the operator norm.
\end{defn}
It is worthwhile to mention here that in the next sections of the paper, we prove all the self lower and upper bound in Definitions \ref{ass:self-bounded}-\ref{ass:laplace} are satisfied for a two-layer neural network under some regularity conditions.
\subsection{Convergence Analysis of Training Loss} \label{sec:caotl}
 The following theorem states that under the conditions above, the training loss converges to zero at an exponentially fast rate. 

\begin{theorem}[Convergence of Training Loss]\label{lem:train} Consider normalized gradient descent update rule with loss $F$ and step-size $\eta_t$.  Assume $F$ and the normalized GD algorithm satisfy log-Lipschitzness in the gradient path with parameter $C$, as well as self-boundedness of the Gradient and the Hessian and the self-lower bounded Gradient properties with parameters $h,H$ and $\mu$, respectively. Let $\eta_t=\frac{\eta}{F(w_t)}$ for all $t\in[T]$ and for any positive constant $\eta$ satisfying $\eta \le \frac{\mu^2}{HCh^2}$. Then for the training loss at iteration $T$ the following bound holds:
\bea\label{eq:trainthm1}
F(w_{_T}) \le (1-\frac{\eta\mu^2}{2})^T F(w_0).
\eea
\end{theorem}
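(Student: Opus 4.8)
The plan is to run a one-step descent analysis based on a second-order Taylor expansion of $F$ along the segment $[w_t,w_{t+1}]$, and then iterate. Writing $\Delta_t := w_{t+1}-w_t = -\eta_t\,\nabla F(w_t) = -\tfrac{\eta}{F(w_t)}\nabla F(w_t)$, Taylor's theorem with a second-order remainder gives, for some intermediate point $v\in[w_t,w_{t+1}]$,
\[
F(w_{t+1}) = F(w_t) + \nabla F(w_t)^\top \Delta_t + \tfrac12\,\Delta_t^\top \nabla^2 F(v)\,\Delta_t .
\]
Substituting the normalized-GD step makes the linear term exactly $-\tfrac{\eta}{F(w_t)}\|\nabla F(w_t)\|^2$, while the remainder is bounded in absolute value by $\tfrac12\|\nabla^2 F(v)\|\,\|\Delta_t\|^2 = \tfrac{\eta^2}{2F(w_t)^2}\|\nabla^2 F(v)\|\,\|\nabla F(w_t)\|^2$.

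First I would control the remainder, which is the crux of the whole argument: the Hessian is evaluated at the intermediate point $v$, not at $w_t$. Self-boundedness of the Hessian gives $\|\nabla^2 F(v)\|\le H\,F(v)$, but $F(v)$ is a priori uncontrolled; this is precisely where log-Lipschitzness in the gradient path is used, supplying $F(v)\le C\,F(w_t)$ for every $v\in[w_t,w_{t+1}]$. Combining the two yields $\|\nabla^2 F(v)\|\le HC\,F(w_t)$, so the remainder is at most $\tfrac{HC\eta^2}{2F(w_t)}\|\nabla F(w_t)\|^2$.

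Next I would insert the two gradient-norm bounds with the correct orientation. For the negative linear term I would apply the self-lower-bounded gradient $\|\nabla F(w_t)\|^2\ge \mu^2 F(w_t)^2$ to guarantee a genuine decrease of size at least $\eta\mu^2 F(w_t)$; for the positive remainder I would apply self-boundedness $\|\nabla F(w_t)\|^2\le h^2 F(w_t)^2$ to cap it by $\tfrac{HC h^2\eta^2}{2}F(w_t)$. This yields the one-step inequality
\[
F(w_{t+1}) \le F(w_t)\Big(1-\eta\mu^2+\tfrac{HC h^2\eta^2}{2}\Big).
\]

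Finally, the stepsize constraint $\eta\le \mu^2/(HCh^2)$ is exactly what forces the quadratic penalty to absorb at most half of the linear gain: it gives $HCh^2\eta\le\mu^2$, hence $\tfrac{HC h^2\eta^2}{2}\le\tfrac{\eta\mu^2}{2}$, so the per-step factor is at most $1-\tfrac{\eta\mu^2}{2}$. Iterating from $t=0$ to $T-1$ (using $F(w_t)>0$ throughout, which holds for the exponential loss so that the division by $F(w_t)$ and the stepsize are well defined) produces the claimed geometric rate $F(w_T)\le(1-\tfrac{\eta\mu^2}{2})^T F(w_0)$. I expect the intermediate-point Hessian estimate in the second step to be the only real obstacle; the rest is bookkeeping once the direction of each inequality and the stepsize budget are lined up.
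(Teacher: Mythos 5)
Your proposal is correct and follows essentially the same route as the paper's own proof: a second-order Taylor expansion along $[w_t,w_{t+1}]$, with the intermediate-point Hessian controlled by combining self-boundedness of the Hessian with log-Lipschitzness in the gradient path, the lower gradient bound applied to the linear term and the upper bound to the remainder, and the stepsize condition absorbing half the linear gain before iterating. No gaps; the argument matches the paper step for step.
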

\begin{remark}
The proof of Theorem \ref{lem:train} is provided in Appendix \ref{sec:appA}, where we use a Taylor expansion of the loss and apply the conditions of the theorem. It is worth noting that the rate obtained for normalized GD in Theorem \ref{lem:train} is significantly faster than the rate of $\widetilde{O}(\frac{1}{T})$ for standard GD with logistic or exponential loss in neural networks (e.g., \cite[Thm 4.4]{zou2020gradient}, and \cite[Thm 2]{taheri2023generalizationNN}). Additionally, for a continuous-time perspective on the training convergence of normalized GD, we refer to Proposition \ref{propo:grad_flow} in the appendix, which presents a convergence analysis based on \emph{normalized Gradient Flow}. The advantage of this approach is that it does not require the self-bounded Hessian property and can be used to show exponential convergence of normalized Gradient Flow for leaky-ReLU activation.
\end{remark}

\subsection{Two-Layer Neural Networks}\label{sec:tlnn}

In this section, we prove that the conditions that led to Theorem \ref{lem:train} are in fact satisfied by a two-layer neural network. Consequently, this implies that the training loss bound in Eq.\eqref{eq:trainthm1} is valid for this class of functions. 
We choose $f(t)=\exp(-t)$ for simpler proofs, however an akin result holds for the broader class of exponentially tailed loss functions.   
\par
First, we start with verifying the log-Lipschitzness condition (Definition \ref{ass:gencon}). In particular, here we prove a variation of this property for the iterates of normalized GD i.e., where $w,w'$ are chosen as $w_t,w_{t+1}$. The proof is included in Appendix \ref{sec:appB}.
\begin{lemma}[log-Lipschitzness in the gradient path]\label{lem:relax}
Let $F$ be as in \eqref{eq:logreg} for the exponential loss $f$ and let $\Phi$ be a two-layer neural network with the activation function satisfying Assumption \ref{ass:act}. Consider the iterates of normalized GD with the step-size $\eta_t=\frac{\eta}{F(w_t)}$. Then for any $\lambda\in[0,1]$ the following inequality holds: 
\bea\label{eq:relax1}
F(w_t + \lambda (w_{t+1}-w_t)) \le \exp(\lambda c)\, F(w_t),
\eea
for a positive constant $c$ independent of $\lambda, w_t$ and $w_{t+1}$.
\\
As a direct consequence, it follows that,
\bea\label{eq:relax2}
\max _{v\in[w_t,w_{t+1}]} F(v) \le C\,F(w_t),
\eea
for a numerical constant $C$.
\end{lemma}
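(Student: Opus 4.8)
The plan is to exploit the multiplicative structure of the exponential loss together with Lipschitzness of the activation, and---this is the crux---the fact that the normalized step keeps the per-iteration displacement $\|w_{t+1}-w_t\|$ bounded by an absolute constant. Fix $\lambda\in[0,1]$ and set $v=w_t+\lambda(w_{t+1}-w_t)$. For each sample I write $\exp(-y_i\Phi(v,x_i))=\exp(-y_i\Phi(w_t,x_i))\cdot\exp(-y_i(\Phi(v,x_i)-\Phi(w_t,x_i)))$, and since $y_i\in\{\pm1\}$ the second factor is at most $\exp(|\Phi(v,x_i)-\Phi(w_t,x_i)|)$. Hence if I can establish a uniform bound $|\Phi(v,x_i)-\Phi(w_t,x_i)|\le \lambda c$ with $c$ independent of $i$, $w_t$, $w_{t+1}$, then summing over $i$ and dividing by $n$ gives $F(v)\le \exp(\lambda c)F(w_t)$, which is exactly \eqref{eq:relax1}.

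First I would control the change in the network output. Writing $\Phi(v,x_i)-\Phi(w_t,x_i)=\sum_{j=1}^m a_j\big(\sigma(\langle v_j,x_i\rangle)-\sigma(\langle w_{t,j},x_i\rangle)\big)$ and using that $\sigma'\le\ell$ (Assumption \ref{ass:act}) makes $\sigma$ an $\ell$-Lipschitz map, each summand is at most $|a_j|\,\ell\,\|v_j-w_{t,j}\|\,\|x_i\|\le a\ell R\,\|v_j-w_{t,j}\|$. A Cauchy--Schwarz step across the $m$ neurons gives $\sum_j\|v_j-w_{t,j}\|\le\sqrt m\,\|v-w_t\|$, so that $|\Phi(v,x_i)-\Phi(w_t,x_i)|\le a\ell R\sqrt m\,\|v-w_t\|=\frac{\ell R}{\sqrt m}\,\|v-w_t\|$ after substituting $a=1/m$. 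Since $v-w_t=\lambda(w_{t+1}-w_t)$, this equals $\frac{\ell R}{\sqrt m}\,\lambda\,\|w_{t+1}-w_t\|$.

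The remaining---and decisive---step is to bound $\|w_{t+1}-w_t\|$ by a constant. Here the normalization does the work: $w_{t+1}-w_t=-\eta_t\nabla F(w_t)=-\tfrac{\eta}{F(w_t)}\nabla F(w_t)$, so $\|w_{t+1}-w_t\|=\eta\,\|\nabla F(w_t)\|/F(w_t)$. Differentiating $F$ and using $|y_i|=1$, $|\sigma'|\le\ell$, $\|x_i\|\le R$, and $\sum_j a_j^2\le m a^2=1/m$ yields the self-bounded-gradient estimate $\|\nabla F(w)\|\le\frac{\ell R}{\sqrt m}F(w)$ (equivalently, I may invoke the self-boundedness property of Definition \ref{ass:self-bounded} with $h=\frac{\ell R}{\sqrt m}$). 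Consequently $\|w_{t+1}-w_t\|\le\eta h$, a quantity free of the iterate. Combining the two estimates gives $|\Phi(v,x_i)-\Phi(w_t,x_i)|\le \frac{\ell R}{\sqrt m}\,\eta h\,\lambda=:\lambda c$, closing \eqref{eq:relax1}. Finally, \eqref{eq:relax2} follows because $\lambda\mapsto\exp(\lambda c)$ is increasing on $[0,1]$, so the maximum over the segment is attained at $\lambda=1$, giving $\max_{v\in[w_t,w_{t+1}]}F(v)\le\exp(c)F(w_t)$ with $C:=\exp(c)$.

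The main obstacle is conceptual rather than computational: ensuring $c$ is genuinely independent of the iterates. For unnormalized GD the displacement $\eta\|\nabla F(w_t)\|$ vanishes with the loss and a naive Lipschitz bound would carry an uncontrolled, iterate-dependent factor; it is exactly the pairing of the normalized step size $\eta_t=\eta/F(w_t)$ with the self-bounded-gradient inequality $\|\nabla F\|\le hF$ that cancels $F(w_t)$ and produces the uniform displacement bound $\|w_{t+1}-w_t\|\le\eta h$. Everything else is routine Lipschitz bookkeeping; the only care needed is to track the $a=1/m$ scaling (and the accompanying $1/\sqrt m$ in $h$) so that the $m$-dependence of $c$ comes out correctly.
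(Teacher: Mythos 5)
Your proof is correct and follows essentially the same route as the paper's: bound the per-sample change $|\Phi(v,x_i)-\Phi(w_t,x_i)|$ via Lipschitzness of $\sigma$ and Cauchy--Schwarz across neurons, then use the normalized step together with the self-bounded-gradient estimate $\|\nabla F(w)\|\le h F(w)$ to make the displacement $\|w_{t+1}-w_t\|\le \eta h$ uniform in the iterate, and exponentiate. The only difference is cosmetic: the paper applies a second-order Taylor bound on $\sigma$ (so its exponent carries an extra quadratic term $\tfrac{LR^2}{m}\|w_{t+1}-w_t\|^2$ and depends on $L$), whereas your first-order mean-value bound yields a slightly cleaner constant $c$; both are valid.
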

\par
The next two lemmas state sufficient conditions for $F$ to satisfy the self-lower boundedness for its gradient (Definition \ref{ass:self-lower}). The proofs are deferred to Appendices \ref{sec:appC}-\ref{sec:appD}. 
\begin{lemma}[Self lower-boundedness of gradient]\label{lem:low}
Let $F$ be as in \eqref{eq:logreg} for the exponential loss $f$ and let $\Phi$ be a two-layer neural network with the activation function satisfying Assumption \ref{ass:act}. Assume the training data is linearly separable with margin $\gamma$. Then $F$ satisfies the self-lower boundedness of gradient with the constant $\mu= \frac{\alpha \gamma}{\sqrt{m}}$ for all $w$, i.e., $\|\nabla F(w)\| \ge \mu F(w).$
\end{lemma}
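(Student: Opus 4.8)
The plan is to lower-bound the gradient norm by projecting $\nabla F(w)$ onto a cleverly chosen \emph{fixed} direction in parameter space and then invoking Cauchy--Schwarz, in the spirit of gradient lower bounds for linearly separable classifiers, but now accounting for the nonlinearity through the uniform positivity $\sigma'\ge\alpha$ supplied by Assumption \ref{ass:act}. First I would compute the gradient explicitly. Since $f(t)=\exp(-t)$ satisfies $f'=-f$, and $\nabla_{w_j}\Phi(w,x)=a_j\sigma'(\langle w_j,x\rangle)x$, the $j$-th block of the gradient is $(\nabla F(w))_j = -\frac1n\sum_{i=1}^n f(y_i\Phi(w,x_i))\,y_i\,a_j\,\sigma'(\langle w_j,x_i\rangle)\,x_i$.

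Next, let $v$ be a unit-norm separator realizing the margin, so that $y_i\langle v,x_i\rangle\ge\gamma$ for every $i$. I would take the test direction $u=[u_1;\dots;u_m]$ with $u_j=\mathrm{sign}(a_j)\,v$, which has $\|u\|=\sqrt m$. The sign alignment makes each block contribute a factor $a_j\,\mathrm{sign}(a_j)=|a_j|\ge0$, giving $-\langle\nabla F(w),u\rangle=\frac1n\sum_{i=1}^n f(y_i\Phi(w,x_i))\,\big(y_i\langle v,x_i\rangle\big)\sum_{j=1}^m|a_j|\,\sigma'(\langle w_j,x_i\rangle)$. Every factor inside the sum over $i$ is nonnegative: $y_i\langle v,x_i\rangle\ge\gamma>0$ by the margin hypothesis and $\sigma'\ge\alpha>0$ by Assumption \ref{ass:act}. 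Bounding these below termwise yields $-\langle\nabla F(w),u\rangle\ge \alpha\gamma\big(\sum_{j=1}^m|a_j|\big)F(w)$, in which $F(w)=\frac1n\sum_i f(y_i\Phi(w,x_i))$ reappears exactly.

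Finally, with the scaling $|a_j|=1/m$ (so $\sum_j|a_j|=1$, consistent with $a=\max_j|a_j|=1/m$), Cauchy--Schwarz gives $\|\nabla F(w)\|\ge \frac{|\langle\nabla F(w),u\rangle|}{\|u\|}\ge\frac{\alpha\gamma\,F(w)}{\sqrt m}$, which is exactly the claimed constant $\mu=\alpha\gamma/\sqrt m$. I do not anticipate a serious obstacle: the computation is short and the inequalities are one-sided termwise bounds. The conceptual point worth stressing is that the estimate holds for \emph{all} $w$, not merely near an optimum, and this uniformity is precisely what the hypothesis $\sigma'\ge\alpha>0$ buys, since it guarantees that every per-sample gradient retains a fixed positive alignment with the linear margin direction regardless of the preactivations $\langle w_j,x_i\rangle$. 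The only bookkeeping to watch is the sign-matching built into $u_j$ and the normalizations of $v$ and of the $a_j$, which together pin down the numerical constant.
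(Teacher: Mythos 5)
Your proof is correct and takes essentially the same route as the paper: the paper also lower-bounds the gradient norm by testing against a fixed direction built from the max-margin separator $w^\star$ replicated across the $m$ blocks and weighted by the second-layer signs (the paper uses $\bar v=[a_1w^\star;\dots;a_mw^\star]$, which under $|a_j|=1/m$ is a positive multiple of your $u$), then applies $\sigma'\ge\alpha$ and the margin condition termwise. The constants match, so there is nothing to add.
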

%%%%%%%%%%%%%%%%%%%%%%%%%%%%%%%%%%%%%%%%%%%%%%%%%%%%%%%%%%%%
%%%%%%%%%%%%%%%%%%%%%%%%%%%%%%%% BEYOND LINEAR %%%%%%%%%%%%%%%%%%%%%%%%%%%%%%%%%%%%%%%%%%%%%%%%%%%%%%%%%%%%%%%%%%%%%%%%%%%%%%%

Next, we aim to show that the condition $\|\nabla F(w)\|\ge \mu F(w)$, holds for training data separable by a two-layer neural network during gradient descent updates. In particular, we assume the Leaky-ReLU activation function taking the following form,
\bea\label{eq:piecewisel}
\sigma(t) = \begin{cases}\ell \,t ~~~t\ge 0,\\
\alpha\, t~~~ t<0.
\end{cases}
\eea
for arbitrary non-negative constants $\alpha,\ell$. This includes the widely-used ReLU activation as a special case. Next lemma shows that when the weights are such that the neural network separates the training data, the self-lower boundedness condition holds. 

\begin{lemma}\label{lem:low2}
Let $F$ be in \eqref{eq:logreg} for the exponential loss $f$ and let $\Phi$ be a two-layer neural network with activation function in Eq.\eqref{eq:piecewisel}. Assume the first layer weights $w\in\R^{\widetilde d}$ are such that the neural network separates the training data with margin $\gamma$. Then $F$ satisfies the self- lower boundedness of gradient, i.e,
$\|\nabla F(w)\| \ge \mu F(w),$ where $\mu= \gamma$.
\end{lemma}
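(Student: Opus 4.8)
The plan is to obtain the lower bound by testing the gradient against the weight vector $w$ itself. Because the Leaky-ReLU network $\Phi(\cdot,x)$ in \eqref{eq:piecewisel} is positively $1$-homogeneous in $w$, the direction $w$ is the natural witness. By Cauchy-Schwarz (and noting $w\neq 0$, since a zero weight vector outputs $\Phi\equiv 0$ and cannot separate the data with positive margin),
\bea\nn
\|\nabla F(w)\| \ge \frac{|\langle \nabla F(w), w\rangle|}{\|w\|},
\eea
so it suffices to show $|\langle \nabla F(w), w\rangle| \ge \gamma\,\|w\|\,F(w)$.

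First I would compute this inner product explicitly. Using $f(t)=e^{-t}$ so that $f'=-f$, the chain rule gives $\nabla_{w_j}F(w) = -\tfrac1n\sum_i f(y_i\Phi(w,x_i))\,y_i a_j\,\sigma'(\langle w_j,x_i\rangle)\,x_i$, and summing the blockwise inner products yields
\bea\nn
\langle \nabla F(w), w\rangle = -\frac1n \sum_{i=1}^n f(y_i\Phi(w,x_i))\, y_i \sum_{j=1}^m a_j\,\sigma'(\langle w_j,x_i\rangle)\,\langle w_j, x_i\rangle.
\eea
The key algebraic step is the identity $\sigma'(t)\,t=\sigma(t)$, which holds for the Leaky-ReLU on each of its two linear pieces (and trivially at $t=0$). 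This is exactly Euler's relation for the $1$-homogeneous map, and it collapses the inner sum over $j$ back into $\sum_j a_j\sigma(\langle w_j,x_i\rangle)=\Phi(w,x_i)$. Hence $\langle\nabla F(w),w\rangle = -\tfrac1n\sum_i f(y_i\Phi(w,x_i))\,y_i\Phi(w,x_i)$.

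Finally I would invoke the separation hypothesis. Since $\Phi$ is $1$-homogeneous, the scale-invariant statement that the network separates the data with margin $\gamma$ reads $y_i\Phi(w,x_i)\ge\gamma\|w\|$ for every $i$. As each weight $f(y_i\Phi(w,x_i))$ is strictly positive, substituting this bound termwise gives $|\langle\nabla F(w),w\rangle|\ge \gamma\|w\|\cdot\tfrac1n\sum_i f(y_i\Phi(w,x_i)) = \gamma\,\|w\|\,F(w)$, and combining with the Cauchy-Schwarz step yields $\|\nabla F(w)\|\ge\gamma\,F(w)$, i.e. $\mu=\gamma$ as claimed.

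The only genuine obstacle is the non-smoothness of the Leaky-ReLU at the origin: $F$ may fail to be differentiable when some pre-activation $\langle w_j,x_i\rangle$ vanishes. I expect this to be dispatched either by restricting to the full-measure set where all pre-activations are nonzero, or, more cleanly, by noting that the identity $\sigma'(t)\,t=\sigma(t)$ persists at $t=0$ for \emph{any} subgradient value in $[\alpha,\ell]$ (both sides vanish), so the entire computation goes through verbatim with subgradients. A secondary point worth stating explicitly is the normalization of the margin; the clean constant $\mu=\gamma$ pins it to the $\ell_2$-normalized form $y_i\Phi(w,x_i)\ge\gamma\|w\|$ used above, which is the homogeneous analogue of the linearly separable case in Lemma \ref{lem:low}.
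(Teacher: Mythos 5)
Your proposal is correct and follows essentially the same route as the paper's own proof: the paper also tests the gradient against the unit vector $w/\|w\|$, invokes the Euler identity $\sigma'(t)\,t=\sigma(t)$ for the leaky-ReLU to collapse the inner sum back to $\Phi(w,x_i)$, and applies the normalized margin condition $y_i\Phi(w,x_i)\ge\gamma\|w\|$ termwise. Your explicit treatment of the sign via $f'=-f$ and of the non-differentiability at the origin is slightly more careful than the paper's, but the argument is the same.
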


A few remarks are in place. The result of Lemma \ref{lem:low2} is relevant for $w$ that can separate the training data. Especially, this implies the self lower-boundedness property after GD iterates succeed in finding an interpolator. However, we should also point out that the non-smoothness of leaky-ReLU activation functions precludes the self-bounded Hessian property and it remains an interesting future direction to prove the self lower-boundedness property with general smooth activations.  On the other hand, the convergence of normalized "Gradient-flow" does not require the self-bounded Hessian property, as demonstrated in Proposition \ref{propo:grad_flow}. This suggests that Lemma \ref{lem:low2} can be applied to prove the convergence of normalized Gradient-flow with leaky-ReLU activations. It is worth highlighting that we have not imposed any specific initialization conditions in our analysis as the self-lower bounded property is essentially sufficient to ensure global convergence. 
\par
Next lemma derives the self-boundedness of the gradient and Hessian (c.f. Definitions \ref{ass:self-bounded}-\ref{ass:laplace}) for our studied case. The proof of Lemma \ref{lem:up-low} (in Appendix \ref{sec:appE}) follows rather straight-forwardly from the closed-form expressions of gradient and Hessian and using properties of the activation function. 

\begin{lemma}[Self-boundedness of the gradient and Hessian]\label{lem:up-low}
Let $F$ be in \eqref{eq:logreg}  for the exponential loss $f$ and let $\Phi$ be a two-layer neural network with the activation function satisfying Assumption \ref{ass:act}. Then $F$ satisfies the self-boundedness of gradient and Hessian with constants $h= \frac{\ell R}{\sqrt{m}},H:= \frac{LR^2}{m^2} + \frac{\ell^2 R^2}{m}$ i.e.,
\bea
 \|\nabla F(w)\| \le h F(w),\;\;\;\;  \|\nabla^2 F(w)\| \le H F(w).\nn
\eea
\end{lemma}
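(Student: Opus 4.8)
The plan is to derive closed-form expressions for $\nabla F(w)$ and $\nabla^2 F(w)$ directly from the definition \eqref{eq:logreg} with $f(t)=\exp(-t)$, and then bound each using Assumption \ref{ass:act} together with the uniform bounds $\|x_i\|\le R$ and $a=\frac{1}{m}$. Writing $F(w)=\frac{1}{n}\sum_i \exp(-y_i\Phi(w,x_i))$, the chain rule gives $\nabla_{w_j} F(w) = -\frac{1}{n}\sum_i \exp(-y_i\Phi(w,x_i))\, y_i\, a_j\, \sigma'(\langle w_j,x_i\rangle)\, x_i$. The key structural observation is that each summand carries a factor $\exp(-y_i\Phi(w,x_i))$, which is exactly the per-sample loss appearing in $F(w)$; this is what converts bounds on the remaining factors into bounds proportional to $F(w)$.

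For the gradient bound, first I would estimate the norm of a single block $\|\nabla_{w_j}F(w)\|$ by pulling the scalar factors out: using $|y_i|=1$, $|a_j|\le a=\frac{1}{m}$, $|\sigma'|\le \ell$, and $\|x_i\|\le R$, the triangle inequality gives $\|\nabla_{w_j}F(w)\| \le \frac{\ell R}{m}\cdot\frac{1}{n}\sum_i \exp(-y_i\Phi(w,x_i)) = \frac{\ell R}{m}F(w)$. Then $\|\nabla F(w)\|=\big(\sum_{j=1}^m \|\nabla_{w_j}F(w)\|^2\big)^{1/2} \le \sqrt{m}\cdot \frac{\ell R}{m}F(w) = \frac{\ell R}{\sqrt{m}}F(w)$, yielding $h=\frac{\ell R}{\sqrt{m}}$ as claimed. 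The factor $\sqrt{m}$ arising from aggregating $m$ identically-bounded blocks is the reason $h$ scales like $1/\sqrt m$.

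For the Hessian bound I would compute $\nabla^2 F(w)$ as a block matrix and split each block into two contributions coming from differentiating the summand. Differentiating the $\exp(-y_i\Phi)$ factor produces an outer-product term scaling with $\sigma'\sigma'$ (bounded by $\ell^2$) and two gradient-of-$\Phi$ factors, each contributing $\frac{R}{m}$; differentiating the $\sigma'$ factor produces a diagonal-block term scaling with $\sigma''$ (bounded by $L$) and $\frac{R^2}{m^2}$ from the $a_j$ and $x_i$ factors. To control the operator norm of the full block matrix, I would bound it by the two pieces separately: the $\sigma''$-piece is block-diagonal so its operator norm is the max over blocks, giving a $\frac{LR^2}{m^2}F(w)$ contribution; the outer-product piece, summed over all $(j,k)$ blocks, contributes $\frac{\ell^2 R^2}{m}F(w)$ after accounting for the $m^2$ blocks each of size $\frac{R^2}{m^2}$ times $\ell^2$. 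Combining gives $H=\frac{LR^2}{m^2}+\frac{\ell^2 R^2}{m}$.

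The main obstacle I anticipate is bounding the operator norm of the Hessian correctly, since it is a $\widetilde d\times\widetilde d$ matrix assembled from $m^2$ blocks and a naive triangle inequality over blocks could produce a loose $m$-dependence. The cleanest route is to recognize the dominant (outer-product) piece of the Hessian as a positive-semidefinite rank-structured matrix: for each sample $i$, the second-derivative-of-$\Phi$-squared term is (up to the scalar loss weight) of the form $\exp(-y_i\Phi)\, v_i v_i^\top$ where $v_i$ is the vector with blocks $a_j\sigma'(\langle w_j,x_i\rangle)x_i$, so its operator norm equals $\|v_i\|^2 \le m\cdot\frac{\ell^2 R^2}{m^2}=\frac{\ell^2 R^2}{m}$. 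Averaging over $i$ with weights summing to $F(w)$ then gives the $\frac{\ell^2R^2}{m}F(w)$ term without incurring spurious factors. Handling this rank-one structure carefully, rather than bounding block by block, is the crux; the $\sigma''$ term is comparatively routine since it is block-diagonal.
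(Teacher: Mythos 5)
Your proof follows essentially the same route as the paper's: it uses that $|f'|=|f''|=f$ for the exponential loss to pull the per-sample loss weights out front, bounds $\|\nabla_1\Phi(w,x)\|\le \ell R/\sqrt{m}$ (your block-wise aggregation is just a reorganization of the paper's computation $\sum_j a_j^2\le 1/m$), and splits the Hessian into the block-diagonal $\sigma''$ term plus the loss-weighted outer products $\nabla_1\Phi(w,x_i)\nabla_1\Phi(w,x_i)^\top$, whose operator norm is exactly $\|\nabla_1\Phi(w,x_i)\|^2\le \ell^2R^2/m$ --- precisely the rank-one observation the paper uses. The only point to watch is the $\frac{LR^2}{m^2}$ term: the diagonal block $a_j\sigma''(\langle w_j,x\rangle)xx^\top$ carries a single factor $a_j\le 1/m$, which on its face gives $\frac{LR^2}{m}$, so your claimed $\frac{R^2}{m^2}$ inherits the same extra $1/m$ normalization that the paper silently inserts in its displayed expression for $\nabla^2_1\Phi(w,x)$.
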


We conclude this section by offering a few remarks regarding our training convergence results. We emphasize that combining Theorem \ref{lem:train} and Lemmas \ref{lem:relax}-\ref{lem:up-low} achieves the convergence of training loss of normalized Gradient Descent for two-layer networks. Moreover, in Appendix \ref{sec:appJ}, we refer to Proposition \ref{propo:grad_flow} which presents a continuous time convergence analysis of normalized GD based on Gradient Flow. This result is especially relevant in the context of leaky-ReLU activation, where Proposition \ref{propo:grad_flow} together with Lemma \ref{lem:low2} shows exponential convergence of normalized Gradient-flow. The experiments of the training performance of normalized GD are deferred to Section \ref{sec:numeric}. 

%%%%%%%%%%%%%%%%%%%%%%%% GENERALIZATION%%%%%%%%%%%%%%%%%%%%
%%%%%%%%%%%%%%%%%%%%%%%%%%%%%%%%%%%%%%%%%%%%%%%%%%%%%%%
\subsection{Generalization Error}\label{sec:gen-err}
In this section, we study the generalization performance of normalized GD algorithm.  
Formally, the \emph{test loss} for the data distribution $\mathcal{D}$ is defined as follows, 
\bea\nn
\widetilde F(w):=\E_{(x,y)\sim\mathcal{D}}\,\Big[f(y \Phi(w,x))\Big].
\eea
Depending on the choice of loss $f$, the test loss might not always represent correctly the classification performance of a model. For this, a more reliable standard is the \emph{test error} which is based on the $0-1$ loss,
\bea\nn
\widetilde {F}_{0-1}(w):= \E_{(x,y)\sim\mathcal{D}}\,\Big[\mathbb{I}(y \neq \textsc{sign}(\Phi(w,x)))\Big].
\eea
We also define the \emph{generalization loss} as the gap between training loss and test loss. Likewise, we define the \emph{generalization error} based on the train and test errors. 
\par
With these definitions in place, we are ready to state our results. In particular, in this section we prove that under the normalized GD update rule, the generalization loss at step $T$ is bounded by $O(\frac{T}{n})$ where recall that $n$ is the training sample size. While, the dependence of generalization loss on $T$ seems unappealing, we show that this is entirely due to the fact that a convex-relaxation of the $0-1$ loss, i.e. the loss function $f$,  is used for evaluating the generalization loss. In particular, we can deduce that under appropriate conditions on loss function and data (c.f. Corollary \ref{lem:teste}), the test error is related to the test loss through,
$$
\widetilde F_{0-1}(w_{_T}) = O(\frac{ \widetilde F(w_{_T})}{\|w_{_T}\|}).
$$ 
As we will see in the proof of Corollary \ref{lem:teste}, for normalized GD with exponentially tailed losses the weights norm $\|w_{_T}\|$ grows linearly with $T$. Thus, this relation implies that the test error satisfies $\widetilde F_{0-1}(w_{_T})=O(\frac{1}{n})$. Essentially, this bound on the misclassification error signifies the fast convergence of normalized GD on test error and moreover, it shows that normalized GD never overfits during its iterations.  

%As a remark, we note that Lemma \ref{lem:conv_NN} can be viewed as an extension of Lemma \ref{lem:relax}. While Lemma \ref{lem:relax} verifies a generalized notion of convexity for two-layer neural networks of arbitrary width, Lemma \ref{lem:conv_NN} asserts that for infinitely wide networks, the loss becomes exactly convex. 
 It is worthwhile to mention that our generalization analysis is valid for any model $\Phi$ such that $f(y\Phi(\cdot,x))$ is convex for any $(x,y)\sim\mathcal{D}$. This includes linear models i.e., $\Phi(w,x) = \langle w,x\rangle$ or the Random Features model \cite{NIPS2007_013a006f}, i.e., $\Phi(w,x) = \langle w,\sigma(A x)\rangle$ where $\sigma(\cdot)$ is applied element-wise on its entries and the  matrix $A\in\R^{m\times d}$ is initialized randomly and kept fixed during train and test time. Our results also apply to neural networks in the NTK regime due to the convex-like behavior of optimization landscape in the infinite-width limit. 

We study the generalization performance of normalized GD, through a stability analysis \cite{bousquet2002stability}. The existing analyses in the literature for algorithmic stability of $\tilde L-$smooth losses, rely on the step-size satisfying $\eta_t = O(1/\tilde L)$. This implies that such analyses can not be employed for studying increasingly large step-sizes as in our case $\eta_t$ is unboundedly growing. 
In particular, the common approach in the stability analysis \cite{hardt2016train,lei2020fine} uses the ``non-expansiveness'' property of standard GD with smooth and convex losses, by showing that for $\eta\le 2/\tilde L$ and for any two points $w,v\in\R^{\tilde d}$, it holds that
$
\|w-\eta\nabla F(w) - (v-\eta\nabla F(v))\|\le \|w-v\|.
$
Central to our stability analysis is showing that under the assumptions of self-boundedness of Gradient and Hessian,  the  normalized GD update rule satisfies the non-expansiveness condition with any step-size satisfying both $\eta\lesssim\frac{1}{F(w)}$ and $\eta\lesssim\frac{1}{F(v)}$. The proof is included in Appendix \ref{sec:appI}. 
%%%%%%%%%%%%%%%%%%%%%%%%%%%% Co-Coerciveness%%%%%%%%%%%%%%%%%%%%%%%%%%%%%%%%%%%%%%%%%%%%%%%%%%%%%%%%%%%%%%%%%%%%%%%%%%%%%%%%%%%%%%%%%%%%%%%%%%%%%%%%%%%%%%%%%%%%%%%%%%%%%%%%%%%%%%%%%%%%

\begin{lemma}[Non-expansiveness of normalized GD]\label{lem:expansive}
Assume the loss $F$ to satisfy convexity and self-boundedness for the gradient and the Hessian with parameter $h\le 1$ (Definitions \ref{ass:self-bounded}-\ref{ass:laplace}). Let $v,w\in\R^d$. If $\eta \le \frac{1}{h \cdot\max(F(v),F(w))}$, then
\bea\nn
\|w-\eta\nabla F(w) - (v-\eta\nabla F(v))\| \le \|w-v\|.
\eea
\end{lemma}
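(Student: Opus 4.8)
The plan is to exploit the classical fact that a gradient step on a convex, smooth objective is a contraction, but to replace the missing \emph{global} smoothness constant by an \emph{effective}, point-dependent one supplied by the self-bounded Hessian. The starting point is that the displacement between the two updated points is a linear image of $w-v$. Setting $A := \int_0^1 \nabla^2 F\big(v + s(w-v)\big)\,ds$, the fundamental theorem of calculus applied to the gradient gives $\nabla F(w) - \nabla F(v) = A(w-v)$, so that
\bea\nn
\big(w - \eta\nabla F(w)\big) - \big(v - \eta\nabla F(v)\big) = (\Id - \eta A)(w-v).
\eea
Hence it suffices to prove $\|\Id - \eta A\| \le 1$ in operator norm. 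I would then record the two structural properties of $A$: each $\nabla^2 F(\cdot)$ is symmetric, and by convexity of $F$ positive semidefinite, so the average $A$ is symmetric with $A \succeq 0$. For such $A$ the eigenvalues of $\Id - \eta A$ lie in $[\,1-\eta\|A\|,\,1\,]$, whence $\|\Id - \eta A\|\le 1$ holds exactly when $\eta\|A\| \le 2$. The entire lemma thus reduces to the single scalar estimate $\eta\|A\|\le 2$.

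The crux — and the only step I expect to require care — is bounding $\|A\|$ through the self-bounded Hessian, because a priori the Hessian norm is controlled only by $F$, which could be large somewhere along the segment $[v,w]$. Here I would combine the triangle inequality for the integral with Definition \ref{ass:laplace} to get $\|A\| \le \int_0^1 \|\nabla^2 F(v+s(w-v))\|\,ds \le h\int_0^1 F(v+s(w-v))\,ds$, and then invoke convexity of $F$ to tame the integrand: $F(v+s(w-v)) \le (1-s)F(v)+sF(w)\le \max\{F(v),F(w)\}$ for every $s\in[0,1]$. This yields
\bea\nn
\|A\| \le h\,\max\{F(v),F(w)\},
\eea
i.e. an effective smoothness constant attached to the specific pair $(v,w)$ rather than to the whole space. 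This is precisely the mechanism that lets an unboundedly growing step size remain compatible with contraction, and it is what distinguishes the argument from the standard non-expansiveness proofs that fix a global $\tilde L$.

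To finish, I would combine the pieces: under the hypothesis $\eta \le 1/\big(h\max\{F(v),F(w)\}\big)$ we obtain $\eta\|A\| \le 1 \le 2$, so $\|\Id-\eta A\|\le 1$ and the claimed non-expansiveness follows. (The factor-two slack shows the hypothesis is in fact stronger than strictly needed for this lemma; the assumption $h\le 1$ is not essential to the contraction step itself but is retained for consistency with the self-bound constants used in the surrounding stability analysis.) If one prefers to avoid reasoning about the operator norm of a matrix, an equivalent route is to expand $\|(\Id-\eta A)(w-v)\|^2 = \|w-v\|^2 - 2\eta\langle \nabla F(w)-\nabla F(v),\,w-v\rangle + \eta^2\|\nabla F(w)-\nabla F(v)\|^2$ and apply the co-coercivity inequality $\langle \nabla F(w)-\nabla F(v),\,w-v\rangle \ge \|A\|^{-1}\|\nabla F(w)-\nabla F(v)\|^2$, which itself follows from $A^2 \preceq \|A\|\,A$; this leads to the same sufficient condition $\eta\|A\|\le 2$.
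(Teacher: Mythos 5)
Your proof is correct, and it takes a genuinely different and noticeably more direct route than the paper's. The paper follows the classical co-coercivity path: it introduces $G(w,v) := F(w) - \langle\nabla F(v),w\rangle$, proves a descent-lemma-type inequality for $G$ using the self-bounded Hessian, and must then control $F$ at the auxiliary point $\tilde w - r\nabla_1 G(\tilde w,v)$ via a contradiction argument (establishing $F(\tilde w-r\nabla_1 G(\tilde w,v))\le 4F(\tilde w)$) before arriving at the co-coercivity inequality $r\|\nabla F(w)-\nabla F(v)\|^2 \le \langle\nabla F(v)-\nabla F(w),v-w\rangle$ and expanding the squared norm exactly as in your closing remark. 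You bypass all of that bookkeeping by writing the displacement as $(\Id-\eta A)(w-v)$ with $A=\int_0^1\nabla^2 F(v+s(w-v))\,ds$, using convexity twice — once for $A\succeq 0$ and once for $F(v+s(w-v))\le\max\{F(v),F(w)\}$ — to get $\eta\|A\|\le 1\le 2$ directly. Both proofs rest on the same two ingredients (positive semidefiniteness of the Hessian and the bound $\max_{u\in[v,w]}F(u)\le\max\{F(v),F(w)\}$, which the paper also invokes), but your version requires no auxiliary function, no contradiction step, and makes the factor-two slack in the step-size hypothesis transparent. The only cost is that you use the averaged Hessian explicitly, hence twice-differentiability along the segment, but that is already assumed throughout via Definition \ref{ass:laplace}, so nothing is lost. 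The one thing the paper's longer route buys is the co-coercivity inequality itself as an intermediate product; since the lemma only asserts non-expansiveness, your argument suffices, and your final remark shows you could recover co-coercivity from $A^2\preceq\|A\|A$ if it were needed elsewhere.
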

\par
The next theorem characterizes the test loss for both Lipschitz and smooth objectives. Before stating the theorem, we need to define $\delta$. For the leave-one-out parameter $w_t^{\neg i}$ and loss $F^{\neg i}(\cdot)$ defined as
\bea\nn
w_{t+1}^{\neg i} = w_t^{\neg i} - \eta_t \nabla F^{\neg i}(w_t^{\neg i}),  
\eea
and 
\bea
F^{\neg i}(w): = \frac{1}{n} \sum_{\substack{{j=1}\\ {j\neq i}}}^n f(w,z_j), \nn
\eea
we define $\delta\ge1$ to be any constant which satisfies for all $t\in[T],i\in[n]$, the following
$$ F^{\neg i}(w_t^{\neg i})\le \delta\, F^{\neg i}(w_t).$$
While this condition seems rather restrictive, we prove in Lemma \ref{lem:onthm8} in Appendix \ref{sec:appc3} that the condition on 
$\delta$ is satisfied by two-layer neural networks with sufficient over-parameterization. With these definitions in place, we are ready to state the main theorem of this section.
\begin{theorem}[Test loss]\label{lem:test}
Consider normalized GD update rule with $\eta_t=\frac{\eta}{F(w_t)}$ where $\eta\le \frac{1}{h \delta}$. Assume the loss $F$ to be convex and to satisfy the self-bounded gradient and Hessian property with a parameter $h$ (Definitions \ref{ass:self-bounded}-\ref{ass:laplace}). Then the following statements hold for the test loss:  
\\
\\
(i) if the loss $F$ is $G$-Lipschitz, then the generalization loss at step $T$ satisfies

\bea
\E [\widetilde F(w_{_T})-F(w_{_T})]\le \frac{2GT}{ n}.\nn
\eea
\\
(ii) if the loss $F$ is $\tilde L$-smooth, then the test loss at step $T$ satisfies, 

\bea
\E[\widetilde F(w_{_T})]\le 4\E[F(w_{_T})] +\frac{3\tilde L^2 T}{ n},\nn
\eea
where all expectations are over training sets.
\end{theorem}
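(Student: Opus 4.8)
The plan is to run an algorithmic-stability argument built around a coupling of the normalized-GD trajectory $w_t$ on the full sample with the leave-one-out trajectory $w_t^{\neg i}$, both driven by the same step-size $\eta_t=\eta/F(w_t)$. First I would invoke the standard symmetrization identity that expresses the expected generalization gap $\E[\widetilde F(w_T)-F(w_T)]$ as an average over $i\in[n]$ of the loss difference $f(w_T^{\neg i},z_i)-f(w_T,z_i)$ evaluated at the held-out point $z_i$ (which is independent of the trajectory $w_t^{\neg i}$). This reduces both parts of the theorem to controlling the trajectory drift $\|w_T-w_T^{\neg i}\|$, together with an appropriate conversion from parameter distance to loss difference.

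The engine of the coupling is the decomposition $\nabla F=\nabla F^{\neg i}+\tfrac1n\nabla f(\cdot,z_i)$, which lets me write
$$ w_{t+1}-w_{t+1}^{\neg i}=\big([w_t-\eta_t\nabla F^{\neg i}(w_t)]-[w_t^{\neg i}-\eta_t\nabla F^{\neg i}(w_t^{\neg i})]\big)-\tfrac{\eta_t}{n}\nabla f(w_t,z_i). $$
The bracketed term is precisely one normalized-GD step on the convex loss $F^{\neg i}$ applied at the two points, so Lemma \ref{lem:expansive} makes it non-expansive, provided its step-size hypothesis $\eta_t\le 1/(h\max(F^{\neg i}(w_t),F^{\neg i}(w_t^{\neg i})))$ holds. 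This is exactly where the constant $\delta$ and the assumption $\eta\le 1/(h\delta)$ enter: since $F\ge F^{\neg i}$ and $F^{\neg i}(w_t^{\neg i})\le\delta F^{\neg i}(w_t)\le\delta F(w_t)$, the choice $\eta_t=\eta/F(w_t)$ verifies the hypothesis. Telescoping from the shared initialization $w_0=w_0^{\neg i}$ then yields $\|w_T-w_T^{\neg i}\|\le\tfrac1n\sum_{t<T}\eta_t\|\nabla f(w_t,z_i)\|$.

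For part (i) I would apply the self-bound samplewise, $\|\nabla f(w_t,z_i)\|\le h\,f(w_t,z_i)$, so that averaging over $i$ gives $\tfrac1n\sum_i\eta_t\|\nabla f(w_t,z_i)\|\le\eta_t\,h\,F(w_t)=\eta h$: the growing step-size $\eta_t$ cancels against the $1/F(w_t)$ it contains, leaving a constant per-step increment. Summing gives $\tfrac1n\sum_i\|w_T-w_T^{\neg i}\|\le\eta h T/n\le T/n$, and the $G$-Lipschitz conversion $|f(w_T^{\neg i},z_i)-f(w_T,z_i)|\le G\|w_T^{\neg i}-w_T\|$ finishes the bound (the factor $2$ being absorbed from the symmetrization step). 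For part (ii) the loss is only smooth, so I would replace the Lipschitz bound by the smooth self-bound $\|\nabla f\|\le\sqrt{2\tilde L f}$ and convert parameter distance to loss via the second-order inequality $f(w_T^{\neg i},z_i)\le 2f(w_T,z_i)+\tilde L\|w_T^{\neg i}-w_T\|^2$ (smoothness followed by Young). Averaging, Cauchy--Schwarz across samples ($\tfrac1n\sum_i\sqrt{f(w_t,z_i)}\le\sqrt{F(w_t)}$), and resolving the resulting cross term by a further Young's inequality is what inflates the coefficient of $\E[F(w_T)]$ to the stated $4$ and produces the additive $\tfrac{3\tilde L^2 T}{n}$.

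The main obstacle, and the point distinguishing this from classical stability proofs, is the unboundedly growing step-size $\eta_t=\eta/F(w_t)\to\infty$, which rules out the usual requirement $\eta_t=O(1/\tilde L)$ for non-expansiveness. The argument hinges on Lemma \ref{lem:expansive} supplying non-expansiveness for these large step-sizes (via self-boundedness of the gradient and Hessian), on the $\delta$-condition certifying its step-size hypothesis along the leave-one-out path, and on the cancellation of $\eta_t$ against $F(w_t)$ that keeps the per-step stability increment of order $1/n$. For part (ii) the secondary difficulty is the self-referential appearance of the test loss through the fresh-sample expectation, which must be decoupled by Young's inequality at the cost of the multiplicative constant.
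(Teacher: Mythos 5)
Your part (i) follows the paper's proof essentially verbatim: the same symmetrization of the generalization gap over leave-one-out trajectories, the same decomposition $\nabla F=\nabla F^{\neg i}+\tfrac1n\nabla f(\cdot,z_i)$, the same appeal to Lemma \ref{lem:expansive} certified by the $\delta$-condition, and the same cancellation $\eta_t\,h\,F(w_t)=\eta h\le 1$ that keeps each stability increment of order $1/n$. That part is correct.

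Part (ii) as you describe it has a genuine gap. You propose to replace the samplewise self-bound $\|\nabla f(w_t,z_i)\|\le h\,f(w_t,z_i)$ by the smoothness bound $\|\nabla f\|\le\sqrt{2\tilde L f}$ when controlling the per-step drift. This destroys exactly the cancellation that makes the argument survive the growing step-size: the increment becomes $\tfrac{\eta_t}{n}\sqrt{2\tilde L\,f(w_t,z_i)}$, and after averaging over $i$ and Cauchy--Schwarz it is at best $\tfrac{\eta\sqrt{2\tilde L}}{n\sqrt{F(w_t)}}$, which diverges as $F(w_t)\to0$ (and $F(w_t)$ decays geometrically under normalized GD). Your drift bound therefore no longer telescopes to $O(T/n)$. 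The self-bounded gradient property with parameter $h$ is a standing hypothesis of the theorem and remains available in part (ii); the paper keeps the same recursion $\|w_{t+1}-w_{t+1}^{\neg i}\|\le\|w_t-w_t^{\neg i}\|+\tfrac{h}{n}\eta_t f(w_t,z_i)$ in both parts, and uses smoothness only to convert the squared parameter distance into a loss bound via the known stability inequality $\E[\widetilde F(w)]\le 4\E[F(w)]+\tfrac{3\tilde L^2}{n}\sum_i\E[\|w-w^{\neg i}\|^2]$, which it imports from the literature rather than re-deriving as you sketch (your pointwise inequality $f(w')\le 2f(w)+\tilde L\|w'-w\|^2$ would yield different constants). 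Once you restore the $h$-self-bound for the drift and insert the squared telescoped bound into that inequality, part (ii) goes through.
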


The proof of Theorem \ref{lem:test} is deferred to Appendix \ref{sec:appG}. As discussed earlier in this section, the test loss dependence on $T$ is due to the rapid growth of the $\ell_2$ norm of $w_t$. As a corollary, we show that the generalization error is bounded by $O(\frac{1}{n})$. For this, we assume the next condition.
\begin{ass}[Margin]\label{ass:margin}
There exists a constant $\tilde \gamma$ such that after sufficient iterations the model satisfies $|\Phi(w_t,x)|\ge  \tilde \gamma \|w_t\|$ almost surely over the data distribution $(x,y)\sim\mathcal{D}$.  
\end{ass}
Assumption \ref{ass:margin} implies that the absolute value of the margin is $\tilde \gamma$ i.e., $\frac{|\Phi(w_t,x)|}{\|w_t\|} \ge \tilde\gamma $ for almost every $x$ after sufficient iterations. This assumption is rather mild, as intuitively it requires that data distribution is not concentrating around the decision boundaries.   
\par

For the loss function, we consider the special case of logistic loss $f(t)=\log(1+\exp(-t))$ for simplicity of exposition and more importantly due to its Lipschitz property. The use of Lipschitz property is essential in view of Theorem \ref{lem:test}. 
%Note that the logistic loss satisfies the assumptions of Theorem \ref{lem:test} since in a similar style to the exponential loss in Lemma \ref{lem:up-low} one can show that it satisfies the self-boundedness of gradient. 
\begin{corollary}[Test error]\label{lem:teste}
 Suppose the assumptions of Theorem \ref{lem:test} hold. Consider the neural network setup under Assumptions \ref{ass:act} and \ref{ass:margin} and let the loss function $f$ be the logistic loss. Then the  test error at step $T$ of normalized GD satisfies the following:
\bea\nn
\E [\widetilde F_{0-1}(w_{_T}) ]= O(\frac{1}{T} \E[F(w_{_T})] + \frac{1}{n})
\eea
%ii) if the loss function $f$ is $\widetilde L$-smooth , then the test error at step $T$ satisfies, 
%
%\bea
%\E[\widetilde F_{0-1}(w_{_T})]= O (\frac{1}{T} \E[F(w_{_T})] + \frac{\tilde L^2 }{h^2 n}).
%\eea
\end{corollary}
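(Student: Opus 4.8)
The plan is to turn the $0$--$1$ test error into the convex test loss through the margin Assumption~\ref{ass:margin}, bound that test loss by Theorem~\ref{lem:test}, and then divide by the norm $\|w_T\|$, which for normalized GD grows linearly in $T$. First I would establish a pointwise surrogate bound. The logistic loss satisfies $f(-z)=\log(1+e^{z})\ge z$ for $z\ge0$ and is decreasing; on a misclassification we have $y\Phi(w_T,x)\le0$, and Assumption~\ref{ass:margin} gives $|\Phi(w_T,x)|\ge\tilde\gamma\|w_T\|$, so $y\Phi(w_T,x)\le-\tilde\gamma\|w_T\|$ and hence $f(y\Phi(w_T,x))\ge f(-\tilde\gamma\|w_T\|)\ge\tilde\gamma\|w_T\|$. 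Since the correctly-classified case makes the indicator zero, this yields almost surely
\bea\nn
\mathbb{I}\big(y\neq\textsc{sign}(\Phi(w_T,x))\big)\le\frac{f(y\Phi(w_T,x))}{\tilde\gamma\,\|w_T\|},
\eea
and taking $\E_{(x,y)\sim\mathcal{D}}$ gives $\widetilde F_{0-1}(w_T)\le \widetilde F(w_T)/(\tilde\gamma\|w_T\|)$, the relation anticipated in the text.

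Next I would control the test loss itself. Because the logistic loss obeys $|f'|\le1$ and $\|\nabla_w\Phi(w,x)\|\le \ell R/\sqrt m$ (using $a=1/m$), the objective $F$ is $G$-Lipschitz with $G=\ell R/\sqrt m$. Theorem~\ref{lem:test}(i) then supplies $\E[\widetilde F(w_T)]\le\E[F(w_T)]+2GT/n$, with the step-size restriction $\eta\le 1/(h\delta)$ of that theorem in force.

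The crux is showing $\|w_T\|=\Theta(T)$. The upper bound is immediate from self-boundedness of the gradient (Lemma~\ref{lem:up-low}): each normalized step satisfies $\|w_{t+1}-w_t\|=\eta_t\|\nabla F(w_t)\|\le\eta h$, so $\|w_T\|\le\|w_0\|+\eta h\,T$. For the lower bound I would reuse the fixed, iterate-independent separating direction $u$ (with $\|u\|=1$) underlying the self-lower-boundedness proof of Lemma~\ref{lem:low}: linear separability of the training data produces a $u$ with $-\langle\nabla F(w_t),u\rangle\ge\mu F(w_t)$. Since $\eta_t=\eta/F(w_t)$, each step advances a constant amount along $u$, namely $\langle w_{t+1}-w_t,u\rangle=-\eta_t\langle\nabla F(w_t),u\rangle\ge\eta\mu$, so that $\|w_T\|\ge\langle w_T,u\rangle\ge\eta\mu\,T-|\langle w_0,u\rangle|\ge c\,T$ for some $c>0$ and all large $T$.

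Finally, combining the surrogate bound with $\|w_T\|\ge cT$ and then Theorem~\ref{lem:test}(i),
\bea\nn
\E[\widetilde F_{0-1}(w_T)]\le\frac{\E[\widetilde F(w_T)]}{\tilde\gamma\,cT}\le\frac{1}{\tilde\gamma cT}\Big(\E[F(w_T)]+\frac{2GT}{n}\Big)=O\Big(\tfrac{1}{T}\E[F(w_T)]+\tfrac{1}{n}\Big),
\eea
which is the claimed bound. The main obstacle I anticipate is precisely the deterministic lower bound $\|w_T\|\ge cT$: the per-step displacement bound $\eta\mu$ by itself does not rule out cancellation, so genuine linear growth needs the fixed-direction projection above, and one must ensure that the separating direction $u$—and hence the constant $c$—can be taken uniform over the random draws of the training set so that $\|w_T\|\ge cT$ holds almost surely and may be pulled outside the expectation.
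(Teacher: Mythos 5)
Your proposal is correct and its outer architecture coincides with the paper's: both reduce the $0$--$1$ error to the surrogate loss via Assumption \ref{ass:margin} together with the linear lower bound $\log(1+e^{z})\ge \Omega(z)$ on the logistic loss at misclassified points, then invoke Theorem \ref{lem:test}(i) and divide by $\|w_{_T}\|=\Theta(T)$. Where you genuinely diverge is in \emph{how} the linear growth of $\|w_{_T}\|$ is established. The paper argues indirectly: by the Lipschitzness of $\Phi$ in $w$, a small loss forces a large norm (if $F(w)<\delta$ then $\|w\|\gtrsim \log(1/\delta)$), and combining this with the geometric decay $F(w_t)\le(1-\tau)^tF(w_0)$ from the training-convergence analysis yields $\|w_t\|=\Theta(t)$ with no reference to any particular direction. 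You instead telescope the projection of the iterates onto the fixed max-margin direction $u$ from Lemma \ref{lem:low}, getting $\langle w_{t+1}-w_t,u\rangle\ge\eta\mu$ per step. Your route is more elementary and makes the mechanism of norm growth transparent, but it buys this at the cost of two extra hypotheses the corollary does not state: (i) linear separability of the raw data $x_i$ (Assumption \ref{ass:margin} only posits a margin for the network $\Phi$, and the alternative direction in Lemma \ref{lem:low2} is $w/\|w\|$, which is not fixed across iterations and so cannot be telescoped); and (ii) the per-sample inequality $-f'(t)\ge\mu' f(t)$, which holds globally for the exponential loss but fails for the logistic loss as $t\to-\infty$, so you would additionally need the iterates to reach nonnegative margins before the constant-advance argument kicks in. The paper's contradiction argument needs neither, only the loss-decay rate and the bound $|\Phi(w,x)|\le\sigma_0\|\tilde a\|_1+\ell R\|\tilde a\|\|w\|$. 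The uniformity-over-training-sets issue you flag at the end is real but affects the paper's proof equally (its decay rate $\tau$ also depends on the realized margin), and the paper does not address it either.
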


The proof of Corollary \ref{lem:teste} is provided in Appendix \ref{sec:appH}. In the proof, we use that $\|w_t\|$ grows linearly with $t$ as well as Assumption \ref{ass:margin} to deduce $\widetilde F_{0-1}(w_{_T}) = O(\frac{ \widetilde F(w_{_T})}{T}).$ Hence, the statement of the corollary follows from Theorem \ref{lem:test} (i). We note that while we stated the corollary for the neural net setup, the result is still valid for any model $\Phi$ that satisfies the Lipschitz property in $w$. We also note that the above result shows the $\frac{1}{n}$-rate for expected test loss which is known to be optimal in the realizable setting we consider throughout the paper. 
%\red{Assume $F^{\neg i}(w^{\neg i}_t)\le K_{t,i} F(w_t)$. Moreover, let $K:=\max_{i\in[n],t\in[T]}K_{t,i} $}

%%%%%%%%%%%%%%%%%%%%%%%%%%%%%%%%%%%%%%%%%%%%%%%%%%%%%%%%
%%%%%%%%%%%%%%%%%%%%%%%%%%NNN%%%%%%%%%%%%%%%%%%%%%%%%%%%%
%%%%%%%%%%%%%%%%%%%%%%%%%%%%%%%%%%%%%%%%%%%%%%%%%%%%%%%%

\subsection{Stochastic Normalized GD}\label{sec:stoch}
In this section we consider a stochastic variant of normalized GD algorithm, Assume $z_t$ to be the batch selected randomly from the dataset at iteration $t$. The stochastic normalized GD takes the form,
\bea\label{eq:stocngd}
w_{t+1} = w_t - \eta_t \nabla F_{z_t}(w_t),
\eea
where $\nabla F_{z_t}(w_t)$ is the gradient of loss at $w_t$ by using the batch of training points $z_t$ at iteration $t$. We assume $\eta_t$ to be proportional to $1/F(w_t)$. Our result in this section states that under the following strong growth condition \cite{schmidt2013fast,vaswani2019fast}, the training loss converges at an exponential rate to the global optimum. 
\begin{ass}[Strong Growth Condition]\label{ass:SGC}
The training loss $F:\R^{\widetilde d}\rightarrow\R$ satisfies the strong growth condition with a parameter $\rho$,
\bea\nn
\E_{z}[\|\nabla F_z(w)\|^2]\le \rho \|\nabla F(w)\|^2.
\eea
\end{ass}
Notably, we show in Appendix \ref{sec:appK} that the strong growth condition holds for our studied case under the self-bounded and self-lower bounded gradient property.
\par
The next theorem characterizes the rate of decay for the training loss. The proof and numerical experiments are deferred to Appendices \ref{sec:appe2} and \ref{sec:appnum}, respectively. 
\begin{theorem}[Convergence of Training Loss]\label{thm:SNGD}
Consider stochastic normalized GD update rule in Eq.\eqref{eq:stocngd}. Assume $F$ satisfies Assumption \ref{ass:SGC} as well as the log-Lipschitzness in the GD path, self-boundedness of the Gradient and the Hessian and the self-lower bounded Gradient properties (Definitions \ref{ass:gencon}-\ref{ass:laplace}). Let $\eta_t=\eta/F(w_t)$ for all $t\in[T]$ and for any positive constant $\eta$ satisfying $\eta \le \frac{\mu^2}{HC\rho h^2}$. Then for the training loss at iteration $T$ the following bound holds:
\bea\nn
F(w_{_T}) \le (1-\frac{\eta\mu^2}{2})^T F(w_0).
\eea
\end{theorem}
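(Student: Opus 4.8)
The plan is to adapt the argument behind Theorem~\ref{lem:train} to the stochastic update, the one genuinely new ingredient being a conditional expectation over the random batch $z_t$ combined with the strong growth condition (Assumption~\ref{ass:SGC}). First I would fix $w_t$ and expand $F$ to second order along the segment $[w_t,w_{t+1}]$: by Taylor's theorem there exists $v\in[w_t,w_{t+1}]$ with
\[
F(w_{t+1}) = F(w_t) - \eta_t\langle\nabla F(w_t),\nabla F_{z_t}(w_t)\rangle + \tfrac{\eta_t^2}{2}\,\nabla F_{z_t}(w_t)^\top\nabla^2 F(v)\,\nabla F_{z_t}(w_t),
\]
where I substituted $w_{t+1}-w_t=-\eta_t\nabla F_{z_t}(w_t)$. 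Because $\eta_t=\eta/F(w_t)$ is deterministic given $w_t$ and the stochastic gradient is unbiased, taking the conditional expectation $\E_{z_t}[\,\cdot\mid w_t]$ collapses the first-order term exactly to $-\eta_t\|\nabla F(w_t)\|^2$.

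For the second-order term I would bound $\nabla^2 F(v)$ in operator norm by $H F(v)$ via self-boundedness of the Hessian, and then invoke log-Lipschitzness in the gradient path to replace $F(v)$ by $C F(w_t)$; this is legitimate since $v$ lies on the (stochastic) update segment, so $F(v)\le C F(w_t)$ holds pointwise irrespective of the realization of $z_t$. What remains is $\tfrac{\eta_t^2}{2}HC F(w_t)\,\E_{z_t}[\|\nabla F_{z_t}(w_t)\|^2]$, and here the strong growth condition enters, bounding the stochastic-gradient second moment by $\rho\|\nabla F(w_t)\|^2$. Assembling these pieces gives
\[
\E_{z_t}[F(w_{t+1})\mid w_t]\le F(w_t)-\eta_t\|\nabla F(w_t)\|^2\Big(1-\tfrac{\eta_t HC\rho}{2}F(w_t)\Big).
\]

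Substituting $\eta_t=\eta/F(w_t)$ turns the parenthetical factor into $1-\tfrac{\eta HC\rho}{2}$, while the self-lower bound $\|\nabla F(w_t)\|\ge\mu F(w_t)$ makes the gain term at least $\eta\mu^2 F(w_t)$ times that factor. The step-size restriction $\eta\le\mu^2/(HC\rho h^2)$ together with $\mu\le h$ (which follows from $\mu F\le\|\nabla F\|\le hF$) forces $\tfrac{\eta HC\rho}{2}\le\tfrac{\mu^2}{2h^2}\le\tfrac12$, so the factor is at least $1/2$ and I obtain the one-step contraction $\E_{z_t}[F(w_{t+1})\mid w_t]\le(1-\tfrac{\eta\mu^2}{2})F(w_t)$. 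Taking total expectations and iterating over $t$ by the tower property then yields $\E[F(w_T)]\le(1-\tfrac{\eta\mu^2}{2})^T F(w_0)$, the asserted bound (understood in expectation over the batch randomness).

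I expect the main obstacle to be the careful handling of the Hessian remainder: the intermediate point $v$ depends on $z_t$ through $w_{t+1}$, so the self-bounded-Hessian and log-Lipschitz bounds must be applied \emph{before} taking the expectation, and only afterwards can the strong growth condition be used to control $\E_{z_t}\|\nabla F_{z_t}(w_t)\|^2$. Everything else mirrors the deterministic proof, with $\rho$ surfacing precisely where the stochastic second moment supplants the deterministic gradient norm, which is exactly why the admissible step size contracts by the factor $\rho$ relative to Theorem~\ref{lem:train}.
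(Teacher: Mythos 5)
Your proposal is correct and follows essentially the same route as the paper's proof: a second-order Taylor expansion, the log-Lipschitz and self-bounded-Hessian properties applied pointwise to control the remainder, then conditional expectation with unbiasedness for the first-order term and the strong growth condition for the second moment, finishing with the self-lower/upper gradient bounds and the step-size restriction. The only cosmetic difference is that you factor out $\|\nabla F(w_t)\|^2$ and invoke $\mu\le h$ to get the factor $1/2$, whereas the paper bounds the second-order term by $h^2(F(w_t))^2$ directly; your explicit remark that the conclusion holds in expectation is a welcome clarification of what the paper's one-step recursion actually delivers.
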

%%%%%%%%%%%%%%%%%%%%%%%%%%%%%%%%%%%%%%%%%%%%%%%%%%%%%%
\section{Numerical Experiments}\label{sec:numeric}

\begin{figure*}[h]
\centering
\includegraphics[width=5.1cm,height=3.8cm]{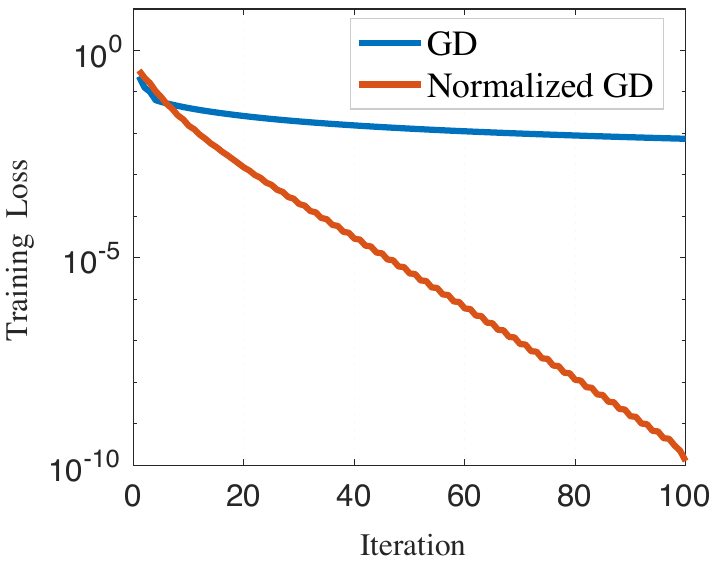}
\;\;
\includegraphics[width=5.1cm,height=3.8cm]{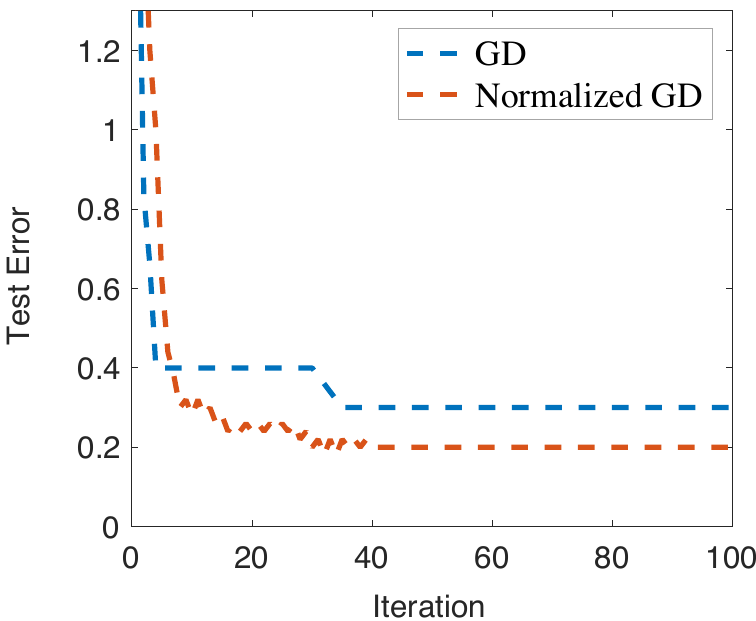}
\;\;
\includegraphics[width=5.1cm,height=3.8cm]{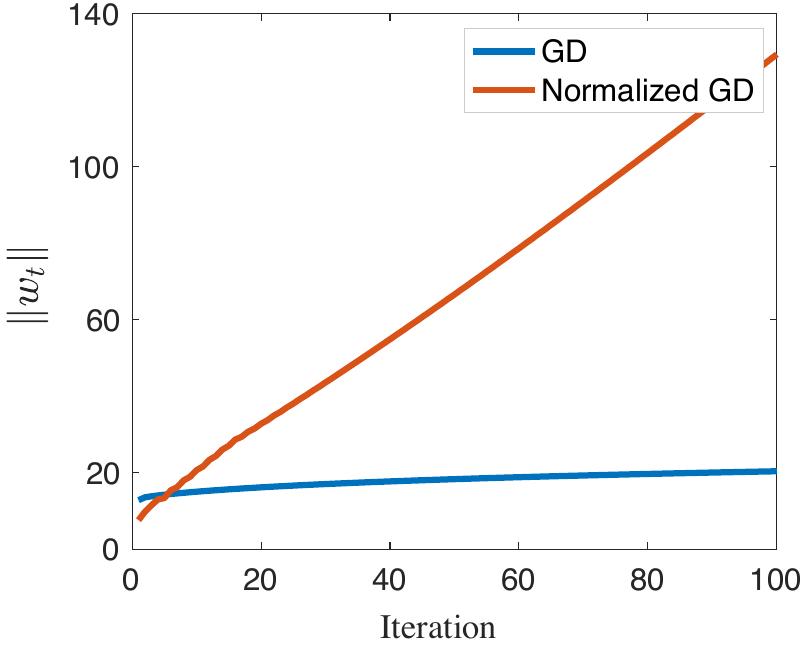}

\caption{Comparison of the training loss, test error (in percentage), and weight norm (i.e., $\|w_t\|$) between gradient descent and normalized gradient descent algorithms. The experiments were conducted on two classes of the MNIST dataset using exponential loss and a two-layer neural network with $m=50$ hidden neurons. The results demonstrate the performance advantages of normalized gradient descent over traditional gradient descent in terms of both the training loss and test error. }
\label{fig:1}
\end{figure*}
\begin{figure*}[h]
\centering
\includegraphics[width=4.8cm,height=3.7cm]{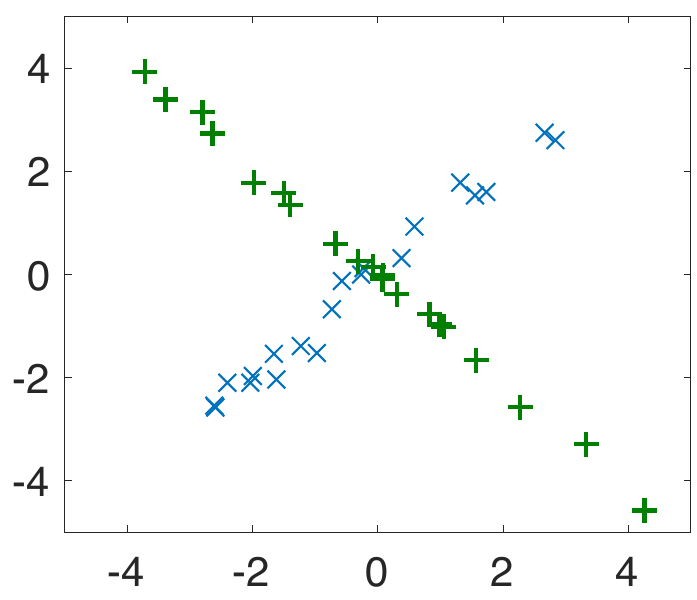}
\hspace{1in}
\includegraphics[width=5.3cm,height=3.7cm]{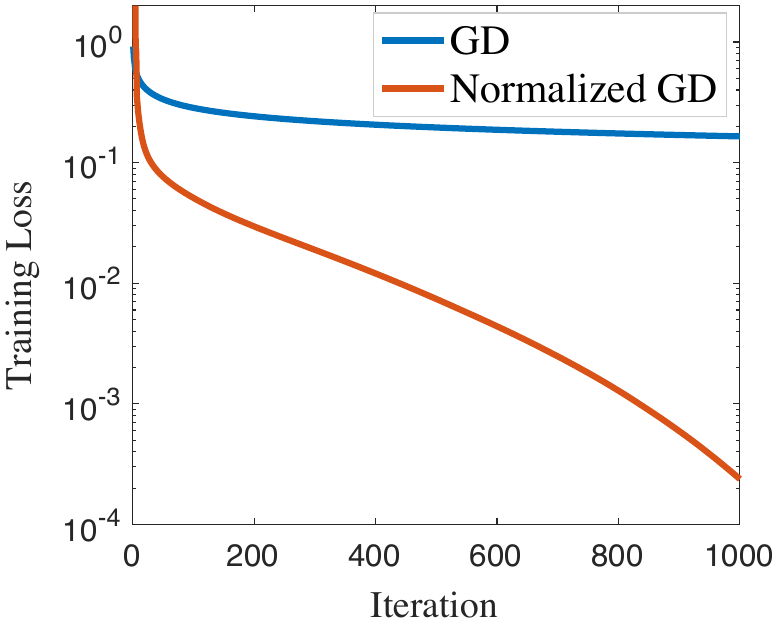}
\\[7pt]
\;\;\includegraphics[width=4.8cm,height=3.7cm]{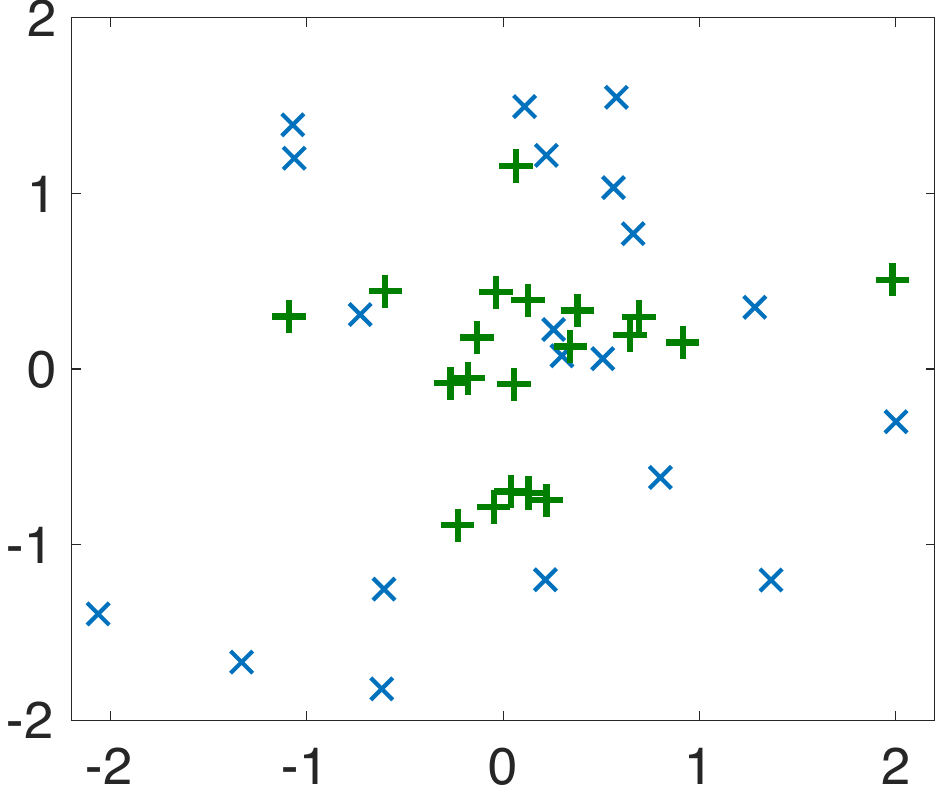}
\hspace{0.9in}
\includegraphics[width=5.6cm,height=3.8cm]{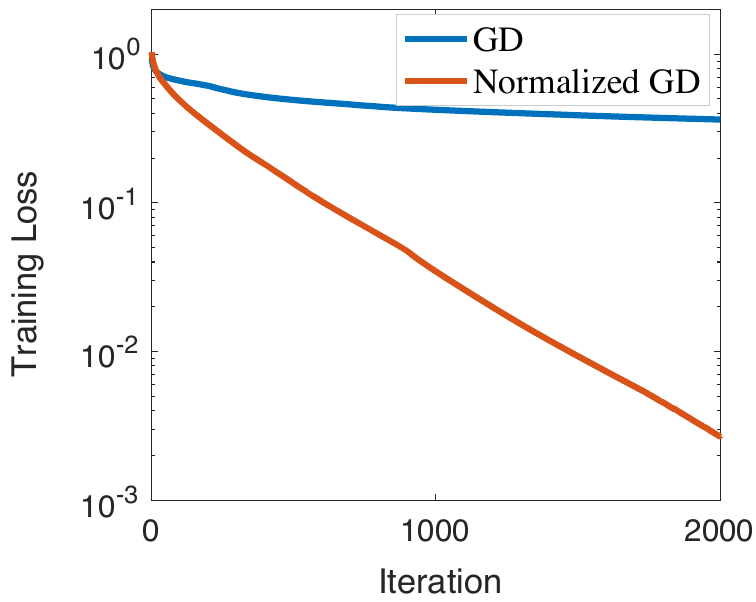}
\caption{The left plot depicts two synthetic datasets, each consisting of $n=40$ data points. On the right, we present the training loss results of gradient descent and normalized gradient descent algorithms applied to a two-layer neural network with $m=50$ (top) and $100$ (bottom) hidden neurons.}
\label{fig:2}
\end{figure*}
In this section, we demonstrate the empirical performance of normalized GD. It is important to highlight that the advantages of normalized GD over standard GD are most pronounced when dealing with well-separated data, such as in high-dimensional datasets. However, in scenarios where the margin is small, the benefits of normalized GD may be negligible.
Figure \ref{fig:1} illustrates the training loss (Left), the test error \% (middle), and the weight norm (Right) of GD with normalized GD. The experiments are conducted on a two-layer neural network with $m=50$ hidden neurons with leaky-ReLU activation function in \eqref{eq:piecewisel} where $\alpha=0.2$ and $\ell=1$. The second layer weights are chosen randomly from $a_j\in\{\pm \frac{1}{m}\}$ and kept fixed during training and test time. The first layer weights are initialized from standard Gaussian distribution and then normalized to unit norm. We consider binary classification with exponential loss using  digits $``0"$ and $``1"$ from the MNIST dataset ($d=784$) and we set the sample size to $n=1000$. The step-size are fine-tuned to $\eta=30$ and $5$ for GD and normalized GD, respectively so that each line represents the best of each algorithm. We highlight the significant speed-up in the convergence of normalized GD compared to standard GD. For the training loss, normalized GD decays exponentially fast to zero while GD converges at a remarkably slower rate.  We also highlight that $\|w_t\|$ for normalized GD grows at a rate $\Theta (t)$ while it remains almost constant for GD. In fact this was predicted by Corollary \ref{lem:teste} where in the proof we showed that the weight norm grows linearly with the iteration number.
 In Figure \ref{fig:2}, we generate two synthetic dataset according to a realization of a zero-mean Gaussian-mixture model with $n-40$ and $d=2$  where the two classes have different covariance matrices (top) and a zero-mean Gaussian-mixture model with $n=40,d=5$ (only the first two entires are depicted in the figure) where $\Sigma_1 = \mathbf{I}, \Sigma_2=\frac{1}{4}\mathbf{I}$ (Bottom). Note that none of the datasets is linearly separable. We consider the same settings as in Figure \ref{fig:1} and compared the performance of GD and normalized GD in the right plots. The step-sizes are fine-tuned to $\eta=80,350$ and $30,20$ for GD and normalized GD, respectively.  Here again the normalized GD algorithm demonstrates a superior rate in convergence to the final solution. 
 \section{Conclusions}
We presented the first theoretical evidence for the convergence of normalized gradient methods in non-linear models. While previous results on standard GD for two-layer neural networks trained with logistic/exponential loss proved a rate of $\widetilde{O}(1/t)$ for the training loss, we showed that normalized GD enjoys an exponential rate. We also studied for the first time, the stability of normalized GD and derived bounds on its generalization performance for convex objectives. We also briefly discussed the stochastic normalized GD algorithm. As future directions, we believe extensions of our results to deep neural networks is interesting. Notably, we expect several of our results to be still true for deep neural networks. Extending the self lower-boundedness property in Lemma \ref{lem:low2} for smooth activation functions is another important direction. Another promising avenue for future research is the derivation of generalization bounds for non-convex objectives by extending the approach used for GD (in \cite{taheri2023generalizationNN}) to normalized GD.
\section*{Acknowledgements}
This work was partially supported by NSF under Grant CCF-2009030. 
\bibliography{main}
%%%%%%%%%%%%%%%%%%%%%%%%%%%%%%%%%%%%%%%%%%%%%%%%%%%%%%
%%%%%%%%%%%%%%%%%%%%%%%% APPENDIX %%%%%%%%%%%%%%%%%%%%%%%%
%%%%%%%%%%%%%%%%%%%%%%%%%%%%%%%%%%%%%%%%%%%%%%%%%%%%%%%
%%%%%%%%%%%%%%%%%%%%%%%%%%%%%%%%%%%%%%%%%%%%%%%%%%%%%%%
%%%%%%%%%%%%%%%%%%%%%%%%%%%%%%%%%%%%%%%%%%%%%%%%%%%%%%%
%\clearpage
\appendix
\onecolumn
\section*{Appendix}
\section {Proof of Theorem \ref{lem:train}}\label{sec:appA}
Based on the conditions of the theorem we have,
\bea
\max _{v\in[w_t,w_{t+1}]} F(v) &\le C\, F(w_t),\nn\\
 \|\nabla ^2 F(w)\| &\le H F(w) \;\;\;\text{and}\;\;\; \|\nabla F(w)\|\in [\mu F(w),h F(w)]\nn
 \eea
Then by Taylor's expansion and using the assumptions of the theorem we can deduce,
\begin{align*}
F(w_{t+1}) &\le  F(w_t) + \langle \nabla F(w_t),w_{t+1}-w_t\rangle + \frac{1}{2}\max_{v\in[w_t,w_{t+1}]} \|\nabla ^2 F(v)\|\cdot\|w_{t+1}-w_t\|^2   \\[4pt]
&\le F(w_t) -\eta_t \|\nabla F(w_t)\|^2 + \frac{\eta_t^2}{2}\max_{v\in[w_t,w_{t+1}]} \|\nabla ^2 F(v)\|\cdot \|\nabla F(w_t)\|^2\\[4pt]
&\le F(w_t) -\eta_t \|\nabla F(w_t)\|^2 + \frac{\eta_t^2 H}{2}\max_{v\in[w_t,w_{t+1}]} F(v)\cdot \|\nabla F(w_t)\|^2\\[4pt]
&\le F(w_t) -\mu^2\eta_t (F(w_t))^2 + \frac{\eta_t^2 H C h^2}{2} (F(w_t))^3
\end{align*}
Let $\eta_t = \frac{\eta}{F(w_t)}$,
\bea\nn
F(w_{t+1}) &\le (1-\eta \mu^2 + \frac{HCh^2\eta^2}{2})F(w_t)
\eea
Then condition on the step-size $\eta\le \frac{\mu^2}{HCh^2}$, ensures that $1-\eta \mu^2 + \frac{HCh^2\eta^2}{2} \le 1-\frac{\eta\mu^2}{2}$. Thus,
\bea\nn
F(w_{t+1}) \le (1-\frac{\eta\mu^2}{2})F(w_t) .
\eea
Thus $F(w_{_T}) \le (1-\frac{\eta\mu^2}{2})^T F(w_0).$ This completes the proof.

%For logistic loss, note that $\|\nabla F(w)\|\ge \mu \cdot\phi (w)$, where $\phi(w)=\frac{1}{1+\exp(w)}$. In this case, we should choose $\eta_t = \frac{2\delta F(w_t)}{(\mu\phi(w_t))^2}$, to get the same rate. This completes the proof. 
\section{Proofs for Section \ref{sec:tlnn}}
\subsection{Proof of Lemma \ref{lem:relax}}\label{sec:appB}

For a sample point $x\in\R^d$ and two weight vectors $w,w'\in\R^{\widetilde d}$, since the activation function satisfies $\sigma'<\ell,\sigma''<L$, we can deduce that, 
\bea\nn
|\Phi(w,x)-\Phi(w',x)| &= |\sum_{j=1}^m a_j \sigma(\langle w_j,x\rangle) - a_j\sigma(\langle w_j',x\rangle)|\\
&\le \sum_{j=1}^m |a_j|\cdot |\sigma(\langle w_j,x\rangle) -\sigma(\langle w_j',x\rangle)|\nn
\eea
By $L$-smoothness of the activation function and recalling that $\sigma'(\cdot)\le\ell$ we can write,
\bea
\sigma(\langle w_j,x\rangle) -\sigma(\langle w_j',x\rangle) &\le \sigma'(\langle w_j',x\rangle) \langle w_j-w_j',x\rangle+ \frac{L}{2}|\langle w_j-w_j',x\rangle|^2\nn\\[4pt]
&\le|\sigma'(\langle w_j',x\rangle)| \cdot |\langle w_j-w_j',x\rangle|+ \frac{L}{2}|\langle w_j-w_j',x\rangle|^2\nn\\[4pt]
&\le \ell \|w_j-w_j'\| \|x\|+ \frac{L}{2}\|w_j-w_j'\|^2\|x\|^2\nn\\[4pt]
&\le \ell R \|w_j-w_j'\|+ \frac{LR^2}{2}\| w_j-w_j'\|^2.\nn
\eea
Since by assumption $|a_j| \le a$,
\bea
|\Phi(w,x)-\Phi(w',x)| &\le \sum_{j=1}^m |a_j| (\ell R \|w_j-w_j'\|+ \frac{LR^2}{2}\| w_j-w_j'\|^2)\nn\\
&\le a R \sum_{j=1}^m (\ell  \|w_j-w_j'\| + LR \|w_j-w_j'\|^2).\nn
\eea
Hence, for a label $y\in\{\pm 1\}$ we have
\bea
-y\Phi(w,x)+y\Phi(w',x) &\le |\Phi(w,x)-\Phi(w',x)|\nn \\
&\le  a R \sum_{j=1}^m (\ell  \|w_j-w_j'\| + LR \|w_j-w_j'\|^2).\nn
\eea
Noting the use of exponential loss and by taking $\exp(\cdot)$ of both sides,

\bea
\frac{f(y\Phi(w,x))}{f(y\Phi(w',x))} &= \exp \left(-y\Phi(w,x)+y\Phi(w',x)\right)\nn\\
&\le \exp\Big( a R \sum_{j=1}^m (\ell  \|w_j-w_j'\| + LR \|w_j-w_j'\|^2)\Big)\nn\\
&\le \exp\left( a R (\sqrt{m}\, \ell \| w-w'\| + LR\|w-w'\|^2) \right)\label{eq:conv-sem}
\eea

Thus for any two points $w,w'$ it holds,
\bea\label{eq:conv-sem}
f(y\Phi(w,x)) \le  f(y\Phi(w',x)) \cdot \exp\left( a R(\sqrt{m}\, \ell \| w-w'\| + LR\|w-w'\|^2) \right)
\eea
Therefore, for a sample loss with $(x_i,y_i)\in\R^d\times\{\pm 1\}$ and $v\in[w_t,w_{t+1}]$ i.e, $v= w_t + \lambda (w_{t+1}-w_t)$ for some $\lambda \in [0,1]$, we have,
\bea
f(y_i\Phi(v,x_i)) &= f(y_i\Phi(w_t + \lambda (w_{t+1}-w_t),x_i))\nn\\[4pt]
&\le f(y_i \Phi(w_t.x_i)) \cdot \exp\left( a R  (\sqrt{m}\,\ell  \|v-w_t\| + LR \|v-w_t\|^2)\right)\nn\\[5pt]
&=  f(y_i \Phi(w_t.x_i)) \cdot \exp\left( a R (\sqrt{m}\,\ell \lambda \|w_{t+1}-w_t\| + LR \lambda^2\|w_{t+1}-w_t\|^2)\right)\nn\\[5pt]
&= f(y_i \Phi(w_t.x_i)) \cdot \exp\left( a R  (\sqrt{m}\,\ell \lambda \eta_t\|\nabla F(w_t)\| + LR \lambda^2\eta_t^2\|\nabla F(w_t)\|^2)\right)\nn\\[4pt]
&=f(y_i \Phi(w_t.x_i)) \cdot \exp\Big( a R  (\sqrt{m}\,\ell \lambda \frac{\eta}{F(w_t)}\|\nabla F(w_t)\| + LR \lambda^2(\frac{\eta}{F(w_t)})^2\|\nabla F(w_t)\|^2)\Big)\nn\\[3pt]
&\le f(y_i \Phi(w_t.x_i)) \cdot \exp\left( \sqrt{m}\,aR\,\ell \lambda h\eta + aLR^2\lambda^2h^2\eta^2\right),\nn
\eea
where for the last step we used the assumption that $\eta_t=\frac{\eta}{F(w_t)}$ for any constant $\eta\le \frac{\mu^2}{HCh^2}$ and the assumption that $\|\nabla F(w)\|\le h F(w)$.
This proves the inequality \eqref{eq:relax1} in the statement of the lemma. 

To derive \eqref{eq:relax2}, note that since $\lambda\le 1$,
\bea
\max _{v\in[w_t,w_{t+1}]} f(y_i\Phi(v,x_i)) &= \max_{\lambda\in[0,1]} f(y_i\Phi(w_t + \lambda (w_{t+1}-w_t),x_i))\nn\\
&\le f(y_i \Phi(w_t.x_i)) \cdot\exp\left( \sqrt{m}\,aR\ell \lambda h\eta + aLR^2\lambda^2h^2\eta^2\right)\nn
\eea
Noting that this holds for all $i\in[n]$, we deduce that the following holds for the training loss: 
\bea
\max _{v\in[w_t,w_{t+1}]} F(v) &\le \frac{1}{n} \sum_{i=1}^n\max _{v\in[w_t,w_{t+1}]} f(y_i\Phi(v,x_i)) \nn\\
&\le F(w_t) \cdot \exp\left( \sqrt{m}\,aR\ell \lambda h\eta + aLR^2\lambda^2h^2\eta^2\right).\nn
\eea
Recalling that $a\le\frac{1}{m}$ and choosing $C =  \exp(\frac{R\ell \lambda h\eta}{\sqrt{m}} + \frac{LR^2\lambda^2h^2\eta^2}{m})$ leads to \eqref{eq:relax2} and completes the proof.

\subsection{Proof of Lemma \ref{lem:low}}\label{sec:appC}
For the lower bound on the gradient norm, we can write
\bea
\|\nabla F(w)\| &= \frac{1}{n} \| \sum_{i=1}^n f(y_i \Phi(w,x_i))y_i \nabla_1 \Phi (w,x_i)\|\nn
\eea
where $\forall w\in\R^{\widetilde d},x\in\R^d$ the gradient of $\Phi$ with respect to the first argument satisfies the following:
\bea
\nabla_1 \Phi (w,x) = [x a_1 \sigma'(\langle w_1,x\rangle);x a_2 \sigma'(\langle w_2,x\rangle);\cdots;x a_m \sigma'(\langle w_m,x\rangle)] \in \R^{\widetilde d}.\nn
\eea
Equivalently, we can write
\bea
\|\nabla F(w)\| &= \sup_{v\in\R^{\widetilde d}, \|v\|_2=1} \Big\langle \frac{1}{n}  \sum_{i=1}^n f(y_i \Phi(w,x_i))y_i \nabla_1 \Phi (w,x_i),v\Big\rangle\nn
\eea
Choose the candidate vector $v$ as follows
\bea
\bar v=[a_1 w^\star;a_2 w^\star;\cdots;a_m w^\star]\in \R^{\widetilde d}\;\;\;\;\;\ v=\bar v/\|\bar v\|,\nn
\eea 
where $w^\star$ is the max-margin separator that satisfies for all $i\in[n]$, $\frac{y_i  \langle x_i,w^\star\rangle}{\|w^\star\|} \ge \gamma$, where $\gamma$ denotes the margin. We have $\|\bar v\|=\|\tilde a\|\|w^*\|$ where $\tilde a \in\R^m$ is the concatenation of second layer weights $a_j$. Recalling $\sigma'(\cdot)\ge \alpha$, 
\bea
\|\nabla F(w)\| &\ge \frac{1}{\|\tilde a\|\|w^*\|}\frac{1}{n} \sum_{i=1}^n f(y_i \Phi(w,x_i))\cdot y_i  \langle x_i,w^\star\rangle \Big (\sum_{j=1}^m a_j^2\sigma'(\langle w_j,x_i\rangle)\Big)\nn\\
&\ge \|\tilde a\|\frac{\alpha}{n} \sum_{i=1}^n f(y_i \Phi(w,x_i))\cdot \frac{y_i\langle x_i,w^\star\rangle }{\|w^*\|}\nn\\
&\ge  \|\tilde a\|\alpha\cdot (\min_{j\in[n]} \frac{y_j \langle x_j,w^\star\rangle }{\|w^*\|}) \cdot \frac{1}{n} \sum_{i=1}^n f(y_i \Phi(w,x_i))\nn\\
&\ge  \|\tilde a\|\alpha \gamma\cdot F(w).\nn
\eea
This completes the proof of the lemma.

\subsection{Proof of Lemma \ref{lem:low2}}\label{sec:appD}
Recall that,

\bea
\|\nabla F(w)\|_2 &= \sup_{v\in\R^{\widetilde d}, \|v\|_2=1} \Big\langle \frac{1}{n}  \sum_{i=1}^n f(y_i \Phi(w,x_i))y_i \nabla_1 \Phi (w,x_i),v\Big\rangle\nn
\eea
where,
\bea
\nabla_1 \Phi (w,x) = [x a_1 \sigma'(\langle w_1,x\rangle);x a_2 \sigma'(\langle w_2,x\rangle);\cdots;x a_m \sigma'(\langle w_m,x\rangle)] \in \R^{\widetilde d}\nn
\eea
Also, assume $w\in\R^{\widetilde d}$ separates the dataset with margin $ \gamma$, i.e., for all $i\in[n]$
\bea
\frac{y_i\Phi(w,x_i)}{\|w\|} \ge  \gamma.\nn
\eea
choose
\bea
 v=\frac{w}{\|w\|}\nn
\eea 
then
\bea
\|\nabla F(w)\| &\ge \Big\langle \frac{1}{n}  \sum_{i=1}^n f(y_i \Phi(w,x_i))y_i \nabla_1 \Phi (w,x_i),v\Big\rangle \nn\\[4pt]
&= \frac{1}{\|w\|}\frac{1}{n} \sum_{i=1}^n f(y_i \Phi(w,x_i))\cdot y_i \sum_{j=1}^m a_j \langle w_j,x_i\rangle\sigma'(\langle w_j,x_i\rangle)\nn
\eea

Based on the activation function,
\bea\nn
 \langle w_j,x_i\rangle\sigma'(\langle w_j,x\rangle)= \begin{cases}
\ell  \langle w_j,x_i\rangle ~~~  \langle w_j,x_i\rangle\ge 0\\
\alpha  \langle w_j,x_i\rangle~~~~  \langle w_j,x_i\rangle <0.
\end{cases}
\eea
which is equal to $\sigma( \langle w_j,x_i\rangle)$.

Thus, 
\bea
\|\nabla F(w)\|  &\ge  \frac{1}{\|w\|}\frac{1}{n} \sum_{i=1}^n f(y_i \Phi(w,x_i))\cdot y_i \sum_{j=1}^m a_j \sigma(\langle w_j,x_i\rangle)\nn\\
 &=\frac{1}{n} \sum_{i=1}^n f(y_i \Phi(w,x_i))\cdot \frac{y_i\Phi(w,x_i)}{\|w\|}\nn\\
&\ge F(w)\cdot \gamma\nn
\eea
This completes the proof.

\subsection{Proof of Lemma \ref{lem:up-low}}\label{sec:appE}
Recall that,
\bea
F(w):= \frac{1}{n} \sum_{i=1}^n f(y_i \Phi(w,x_i)),\nn\\
\Phi(w,x):= \sum_{j=1}^m a_j \sigma(\langle w_j,x\rangle)\nn
\eea
where $x_i\in\R^d,w_j\in\R^d,a_j\in \R, w=[w_1 w_2 ...w_m]\in \R^{\widetilde d}$. Then noting the exponential nature of the loss function we can write, 
\bea
\|\nabla F(w)\| &= \frac{1}{n} \Big\| \sum_{i=1}^n f'(y_i \Phi(w,x_i))y_i \nabla_1 \Phi (w,x_i)\Big\|\nn\\
&\le \frac{1}{n} \sum_{i=1}^n f(y_i \Phi(w,x_i)\|\nabla_1 \Phi (w,x_i)\|.\nn
\eea
Noting that $\sigma'(\cdot)\le\ell$,
\bea
\|\nabla_1 \Phi (w,x)\|^2 = \sum_{j=1}^m \sum_{i=1}^d (a_j x(i) \sigma'(\langle w_j,x\rangle))^2 \le \frac{\ell^2\|x\|^2}{m} \nn
\eea
Thus $\forall w\in \R^{\widetilde d}$ and $h=\frac{\ell R}{\sqrt{m}}$
\bea
\|\nabla F(w)\| \le h F(w).\nn
\eea
For the Hessian, note that since $|\sigma''(\cdot)|\le L$ and 
\bea
\nabla^2_1 \Phi(w,x) = \frac{1}{m}\text{diag}\left(a_1\sigma '' (\langle w_1,x\rangle) xx^T,\ldots,a_m\sigma''(\langle w_m,x\rangle) xx^T\right),
\eea
then the operator norm of model's Hessian satisfies,
\bea
\|\nabla^2_1 \Phi(w,x)\|^2 \le L^2 R^4 a^2.\nn
\eea
Thus, for the objective's Hessian $\nabla^2 F(w)\in \R^{\widetilde d\times \widetilde d}$, we have
\bea
\|\nabla^2 F(w)\| &= \|\frac{1}{n} \sum_{i=1}^n f(y_i \Phi(w,x_i))y_i \nabla^2_1 \Phi (w,x_i) + f(y_i \Phi(w,x_i)) \nabla_1 \Phi(w,x_i) \nabla_1 \Phi(w,x_i)^\top \|\nn\\
&\le \frac{1}{n}\sum_{i=1}^n f(y_i \Phi(w,x_i)) (\|\nabla^2_1 \Phi (w,x_i) \| + \|\nabla_1 \Phi (w,x_i) \nabla_1\Phi(w,x_i)^\top \|)\nn\\
&=  \frac{1}{n}\sum_{i=1}^n f(y_i \Phi(w,x_i)) (\|\nabla^2_1 \Phi (w,x_i) \| + \|\nabla_1 \Phi (w,x_i)\|_2^2) \nn\\
&\le  (\frac{LR^2}{m^2} + \frac{\ell^2 R^2}{m})F(w). \nn\\\nn
\eea
Denoting $H:= \frac{LR^2}{m^2} + \frac{\ell^2 R^2}{m}$, we have $\|\nabla^2 F(w)\|\le H F(w).$ This concludes the proof. 

\section{Proofs for Section \ref{sec:gen-err}}
\subsection{Proof of Lemma \ref{lem:expansive}}\label{sec:appI}
Define $G(w,v):\R^{\tilde d}\times \R^{\tilde d}\rightarrow \R$ as follows,
\bea
G(w,v) := F(w) - \langle\nabla F(v),w\rangle\nn
\eea
Note that
\bea
\|\nabla^2_{1}G(w,v)\| = \|\nabla^2 F(w)\|\le h F(w).\nn
\eea
Thus by Taylor's expansion of $G$ around its first argument and noting the self-boundedness of Hessian and the convexity of $F$, we have for all $w,\tilde w \in \R^d$,
\bea
G(w,v) &\le G(\tilde w)+\langle \nabla_1 G(\tilde w,v), w-\tilde w\rangle + \frac{1}{2}\max_{v\in[w,\tilde w]} \|\nabla^2 F(v)\| \|w-\tilde w\|^2 \nn\\[4pt]
& \le G(\tilde w)+\langle \nabla_1 G(\tilde w,v), w-\tilde w\rangle +\frac{h}{2}\max_{v\in[w,\tilde w]} F(v) \|w-\tilde w\|^2 \nn\\[4pt]
& \le G(\tilde w)+\langle \nabla_1 G(\tilde w,v), w-\tilde w\rangle +\frac{h}{2}\max (F(w),F(\tilde w)) \|w-\tilde w\|^2. \nn
\eea
Taking minimum of both sides
\bea
\min_{w\in \R^d} G(w,v) &\le \min_{w\in \R^d} G(\tilde w,v)+\langle \nabla_1 G(\tilde w,v), w-\tilde w\rangle +\max (F(w),F(\tilde w)) \frac{h\|w-\tilde w\|^2 }{2} \nn\\[4pt]
&\le G(\tilde w,v) - r\|\nabla_1 G(\tilde w,v)\|^2 + \max (F(\tilde w-r\nabla_1 G(\tilde w,v)), F(\tilde w)) \frac{h r^2 \|\nabla_1 G(\tilde w,v)\|^2}{2}\nn\\[4pt]
&\le G(\tilde w,v) - (r-2r^2 h F(\tilde w) )\|\nabla_1 G(\tilde w,v)\|^2 .\label{eq:co}
\eea
In the second step, we chose $w= \tilde w - r\nabla_1 G(\tilde w,v)$ for a positive constant $r$. Moreover, for the last step we used the following inequality (which we will prove hereafter) that holds under $r\le\frac{1}{h (\max(F(v),F(\tilde w)))}$,

\bea\label{eq:descent}
F(\tilde w-r\nabla_1 G(\tilde w,v))\le 4 F(\tilde w).
\eea

The inequality in \eqref{eq:descent} can be proved according to the following steps. First consider the convexity of $F$ and the self-boundedness of Hessian to derive the Taylor's expansion of $F$ in the following style:
\bea
F(\tilde w-r\nabla_1 G(\tilde w,v)) &= F(\tilde w - r \nabla F(\tilde w)+r\nabla F(v)) \nn\\
&\le F(\tilde w - r \nabla F(\tilde w))+r \langle\nabla F(\tilde w - r \nabla F(\tilde w)),\nabla F(v)\rangle + \frac{h M(w,v)}{2}r^2 \|\nabla F(v)\|^2,\label{eq:smstab}
\eea
where we define,
\bea
M(w,v):= \max(F(\tilde w - r \nabla F(\tilde w)+r\nabla F(v)),F(\tilde w - r \nabla F(\tilde w))).
\eea
We have that if $r\le 1/(h F(\tilde w))$, then
\bea
F(\tilde w - r \nabla F(\tilde w)) \le F(\tilde w)\nn
\eea
Now, suppose that the assumption in \eqref{eq:descent} is false and on the contrary $F(\tilde w-r\nabla_1 G(\tilde w,v))> 4 F(\tilde w)$, then 
\bea
M(w,v)=F(\tilde w-r\nabla_1 G(\tilde w,v)).\nn
\eea
By using Cauchy-Shwarz inequality in \eqref{eq:smstab} together with the self-boundedness properties we deduce that
\bea
F(\tilde w-r\nabla_1 G(\tilde w,v)) &\le F(\tilde w) + r\|\nabla F(\tilde w - r \nabla F(\tilde w))\|\|\nabla F(v)\| + \frac{h r^2}{2}\|\nabla F(v)\|^2 F(\tilde w-r\nabla_1 G(\tilde w,v)) \nn\\[4pt]
& \le F(\tilde w) + r h^2 F(\tilde w - r \nabla F(\tilde w)) F(v) + \frac{r^2 h^3}{2} F^2(v)F(\tilde w-r\nabla_1 G(\tilde w,v))\nn\\[4pt]
& \le F(\tilde w) + r h^2 F(\tilde w) F(v) + \frac{r^2 h^3}{2} F^2(v)F(\tilde w-r\nabla_1 G(\tilde w,v))\nn\\[4pt]
&\le 2 F(\tilde w) + \frac{1}{2}F(\tilde w-r\nabla_1 G(\tilde w,v)),\nn
\eea
The last step is derived by the condition on $r$ and the fact that $h\le1$. The last inequality leads to contradiction. This proves \eqref{eq:descent}. 
%%%%%%%%%%%%%%%%%%%%%%%%%%%%%%%%%%%%%%%%%%%
Thus, continuing from \eqref{eq:co} and assuming $r\le \frac{1}{2h F(\tilde w)}$
\bea
 F(v) - \langle\nabla F(v),v \rangle \le  F(\tilde w) - \langle\nabla F(v),\tilde w\rangle -\frac{r}{2}\|\nabla F(\tilde w)-\nabla F(v)\|^2\nn
\eea
Exchanging $v$ and $\tilde w$ in the above and noting that under our assumptions it holds that $r\le \frac{1}{2h F(v)}$, we can write
\bea
 F(\tilde w) - \langle\nabla F(\tilde w),\tilde w \rangle \le  F(v) - \langle\nabla F(\tilde w),v\rangle -\frac{r}{2}\|\nabla F(\tilde w)-\nabla F(v)\|^2\nn
\eea
Combining these two together, we end up with the following inequality:
\bea
r\|\nabla F(\tilde w)-\nabla F(v)\| \le \langle\nabla F(v)-\nabla F(\tilde w),v-\tilde w\rangle.\nn
\eea
Therefore $\forall w,v\in \R^d$ if $\eta\le 2r$ (which the RHS itself is smaller than $\frac{1}{h \max(F(v),F(w))}$),
\bea
\|w-\eta\nabla F(w) - (v-\eta\nabla F(v))\|^2 &=\|v-w\|^{2}-2 \eta\langle\nabla F(v)-\nabla F(w), v-w\rangle+\eta^{2}\|\nabla F(v)-\nabla F(w)\|^{2}\nn \\[4pt]
& \leq\|v-w\|^{2}-\left(2 \eta r-\eta^{2}\right)\|\nabla F(v)-\nabla F(w)\|^{2}\nn \\[4pt]
& \leq\|v-w\|^{2}. \nn
\eea
This completes the proof.

\subsection{Proof of Theorem \ref{lem:test}}\label{sec:appG}
Fix $i\in[n]$ and let $w_t^{\neg i}\in\R^{d}$ be the vector obtained at the step $t$ of normalized GD with the following iterations,
\bea\nn
w_{k+1}^{\neg i} = w_k^{\neg i} - \eta_k \nabla F^{\neg i}(w_k^{\neg i}),  
\eea
where $\eta_{k}$ denotes the step-size at step $k$ which satisfies $\eta_{k}\le \frac{1}{h F^{\neg i}(w_k^{\neg i})}$ for all $k\in[t-1]$. Also, we define the leave-one-out training loss for $i\in[n]$ as follows:
\bea
F^{\neg i}(w): = \frac{1}{n} \sum_{\substack{{j=1}\\ {j\neq i}}}^n f(w,z_j). \nn
\eea
In words, $w_t^{\neg i}$ is the output of normalized GD at iteration $t$ when the $i$th sample is left out while the step-size is chosen independent of the $i$ th sample. Thus, we can write
\bea
\E[\widetilde F(w_t)- F(w_t)] &= \frac{1}{n}\sum_{i=1}^n \E[f(w_t,z)-f(w_t^{\neg i},z)] + \frac{1}{n}\sum_{i=1}^n \E[f(w_t^{\neg i},z_i)-f(w_t,z_i)]\nn\\
&\le \frac{2G}{n} \sum_{i=1}^n\E[ \|w_t-w_t^{\neg i}\|]\label{eq:stability-ngd}
\eea
Since the loss function is non-negative, $F^{\neg i}(w_t) \le F(w_t)$ for all $i$. Thus, by assumption of the theorem the step-size satisfies $\eta_t\le\frac{1}{h \delta F(w_t)}\le\frac{1}{h \delta F^{\neg i}(w_t)}, \; \forall i\in[n]$. By the definition of $\delta$, this choice of step-size guarantees that $\eta_t\le \frac{1}{h F^{\neg i}(w_t^{\neg i})}$. Recalling that $\delta\ge 1$, we deduce that $\eta_t\le\frac{1}{h \max(F^{\neg i}(w_t),F^{\neg i}(w_t^{\neg i}))}$, which allows us to apply Lemma \ref{lem:expansive}. In particular, by unrolling $w_{t+1}$ and $w_{t+1}^{\neg i}$, and using our result from Lemma \ref{lem:expansive} on the non-expansiveness of normalized GD we can write,
\bea
\Big\|w_{t+1}-w_{t+1}^{\neg i}\Big\| &=\Big\|w_{t}-\frac{1}{n}\eta_t\sum_{j=1}^n \nabla f(w_{t},z_j) - w_{t}^{\neg i}+ \frac{1}{n}\eta_t \sum_{j\neq i}^n \nabla f(w_{t}^{\neg i},z_j) \Big\| \nn\\
& = \Big\|w_{t}-\eta_t \nabla F^{\neg i}(w_t) -\frac{1}{n}\eta_t \nabla f(w_t,z_i)- w_{t}^{\neg i} + \eta_t \nabla F^{\neg i}(w_t^{\neg i}) \Big\|\nn \\
&\le \Big\|w_{t}-\eta_t \nabla F^{\neg i}(w_t) - w_{t}^{\neg i} + \eta_t \nabla F^{\neg i}(w_t^{\neg i}) \Big\|  + \frac{1}{n}\eta_t\|\nabla f(w_t,z_i)\|\nn \\
& \leq \Big\|w_{t}-w_{t}^{\neg i}\Big\|+\frac{1}{n}\eta_t\Big\|\nabla f\left(w_{t}, z_{i}\right)\Big\|\nn\\
& \leq \Big\|w_{t}-w_{t}^{\neg i}\Big\|+\frac{1}{n}h\eta_t f(w_{t},z_i).\label{eq:103}
\eea
This result holds for all $i\in[n]$. By averaging over all training samples,
\bea
\frac{1}{n}\sum_{i=1}^n \| w_{t+1}-w_{t+1}^{\neg i}\| \le \frac{1}{n}\sum_{i=1}^n \|w_{t}-w_{t}^i\| + \frac{h}{n}\eta_t F(w_t).\nn
\eea
Thus, by telescoping sum over $t$, for the last iteration we have,
\bea
\frac{1}{n}\sum_{i=1}^n \| w_{_T}-w_{_T}^{\neg i}\| \le \frac{h}{n}\sum_{t=0}^{T-1} \eta_t F(w_t)\nn
\eea
Next, we recall \eqref{eq:stability-ngd} which allows us to bound the generalization gap,
\bea
\E[\widetilde F(w_{_T})- F(w_{_T})]&\le \frac{2G h}{n} \sum_{t=0}^{T-1} \eta_t F(w_t)\nn\\
&\le \frac{2G T}{n}.\nn
\eea
%for $\eta_t = \frac{2\delta}{\mu^2 \max_i(F(w_t),F^{\neg i}(w_t^{\neg i}))}$ and $\delta\le \mu^2/2h^2$. 
%Thus,
%$$\E[\widetilde F(w_{_T})]\le (1-\delta)^T f(w_0) +\frac{2L T}{h n}.$$
This completes the poof for $L$- Lipschitz losses.

For $\tilde L$-smooth losses, the following relation holds between test and train loss and the leave-one-out distance (e.g., see \cite[Lemma 7]{schliserman2022stability}, \cite[Theorem2]{lei2020fine}):
\begin{align}
\E[\widetilde F(w)]\le 4 \E[ F( w)] + \frac{3\tilde L^2}{n} \sum_{i=1}^n \E[\|w-w^{\neg i}\|^2].\label{eq:stab_smooth}
\end{align}
Note the dependence on $\|w-w^{\neg i}\|^2$. Recalling \eqref{eq:103}, we had
\bea
\left\|w_{t+1}-w_{t+1}^{\neg i}\right\| \leq \left\|w_{t}-w_{t}^{\neg i}\right\|+\frac{1}{n}\eta_t\,h\, f(w_{t},z_i)\nn
\eea
By telescoping summation,
\bea
\| w_{_T} - w_{_T}^{\neg i}\| \le \frac{h}{n} \sum_{t=0}^{T-1} \eta_t f(w_t.z_i)\nn
\eea
this gives the following upper bound on the averaged squared norm, 
\bea
\frac{1}{n} \sum_{i=1}^n \| w_{_T} - w_{_T}^{\neg i}\|^2 &\le \frac{h^2}{n^3}\sum_{i=1}^{n}(\sum_{t=1}^{T-1} \eta_t f(w_t.z_i))^2\nn\\[3pt]
&\le  \frac{h^2}{n^3}(\sum_{i=1}^{n}\sum_{t=0}^{T-1} \eta_t f(w_t.z_i))^2\nn\\[3pt]
&= \frac{h^2}{n} (\sum_{t=0}^{T-1} \frac{\eta_t}{n}\sum_{i=1}^n f(w_t.z_i))^2\nn\\[3pt]
&= \frac{h^2}{n} (\sum_{t=0}^{T-1} \eta_t F(w_t))^2.\nn
\eea
Hence, replacing these back in \eqref{eq:stab_smooth}, 
\bea
\E[\widetilde F(w_{_T})] &\le 4 \E[ F( w_{_T})] + \frac{3\tilde L^2 h^2}{n} (\sum_{t=0}^{T-1} \eta_t F(w_t))^2 \nn\\
&\le 4 \E[ F( w_{_T})] + \frac{3\tilde L^2}{n} T.\nn
\eea
%Combining this with the result on training loss for $\eta_t = \frac{2\delta}{\mu^2 \max_i(F(w_t),F^{\neg i}(w_t^{\neg i}))}$ and $\delta\le \mu^2/2h^2$, 
%
%\bea
%\E[\widetilde F(w_{_T})]\le 4(1-\delta)^T f(w_0) +\frac{3\tilde L^2 T}{h^2 n}.\nn
%\eea
This gives the desired result for $\tilde L$-smooth losses in part (ii) of the lemma and completes the proof. 

\subsection{On $\delta$ in Theorem \ref{lem:test}}\label{sec:appc3}
\begin{lemma}\label{lem:onthm8}
Assume the iterates of normalized GD with $\eta\le 1/h$, zero initialization (w.l.o.g) and $m=\beta T^2$ hidden neurons for any constant $\beta>0$. Then $\delta$ in the statement of Theorem \ref{lem:test} is satisfied with $\delta = \exp(\frac{2R\ell}{\sqrt{\beta}} + \frac{4LR^2}{\beta}).$
\end{lemma}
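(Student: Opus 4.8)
The plan is to verify the defining inequality of $\delta$, namely $F^{\neg i}(w_t^{\neg i}) \le \delta\, F^{\neg i}(w_t)$, by controlling the leave-one-out distance $\|w_t - w_t^{\neg i}\|$ and feeding it into the log-Lipschitz estimate. First I would observe that the per-sample bound \eqref{eq:conv-sem} established in the proof of Lemma \ref{lem:relax} carries a prefactor $\exp\!\big(aR(\sqrt m\,\ell\|w-w'\| + LR\|w-w'\|^2)\big)$ that is \emph{independent of the sample index}. Averaging this bound over the $n-1$ retained samples $j\ne i$ (with $w = w_t^{\neg i}$, $w' = w_t$) and using $a = 1/m$ gives
\[
F^{\neg i}(w_t^{\neg i}) \le F^{\neg i}(w_t)\cdot\exp\!\Big(\tfrac{R\ell}{\sqrt m}\,\|w_t - w_t^{\neg i}\| + \tfrac{LR^2}{m}\,\|w_t - w_t^{\neg i}\|^2\Big).
\]
This reduces the whole problem to proving a linear-in-$T$ bound on the distance, namely $\|w_t - w_t^{\neg i}\| \le 2T$ for every $t \le T$.

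For the distance bound I would exploit that each trajectory starts at the origin and that the normalized step size exactly cancels the self-bounded gradient. Concretely, unrolling the update from zero initialization and using $\|\nabla F(w_s)\| \le h F(w_s)$ together with $\eta_s = \eta/F(w_s)$ gives
\[
\|w_t\| \le \sum_{s=0}^{t-1}\eta_s\|\nabla F(w_s)\| \le \sum_{s=0}^{t-1}\eta\, h \le \eta h\, t \le T,
\]
where the last step uses $\eta \le 1/h$ and $t \le T$. Applying the same computation to the reduced loss $F^{\neg i}$ bounds $\|w_t^{\neg i}\| \le T$, and the triangle inequality $\|w_t - w_t^{\neg i}\| \le \|w_t\| + \|w_t^{\neg i}\|$ yields the desired $2T$. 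Substituting $\|w_t - w_t^{\neg i}\| \le 2T$ and $m = \beta T^2$ into the displayed log-Lipschitz bound collapses the exponent to $\tfrac{2R\ell}{\sqrt\beta} + \tfrac{4LR^2}{\beta}$ (the factors of $T$ cancel against $\sqrt m = \sqrt\beta\,T$ and $m=\beta T^2$), which is exactly the claimed $\delta$; the factor $2$ in $\delta$ is precisely the price of the triangle inequality.

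The step I expect to be the main obstacle is the norm control of the leave-one-out iterate $w_t^{\neg i}$, because the step size $\eta_s$ is tied to a loss value and one must be careful about which loss sets its scale. If $w_t^{\neg i}$ is run as genuine normalized GD on the reduced dataset, its own self-bounded gradient cancels its own step size and the bound $\|w_t^{\neg i}\| \le T$ is unconditional; if instead the common full-data step size $\eta/F(w_s)$ is used, this clean cancellation fails and I would instead run a coupled induction on $t$, invoking the non-expansiveness of normalized GD (Lemma \ref{lem:expansive}) exactly as in \eqref{eq:103} to bound $\|w_{t+1}-w_{t+1}^{\neg i}\|$ in terms of $\|w_t - w_t^{\neg i}\|$, and re-certifying the $\delta$-inequality through the log-Lipschitz estimate at each step. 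Either way the base case is immediate since $w_0 = w_0^{\neg i} = 0$ by the zero-initialization assumption, and the induction remains valid as long as the per-iterate ratio $F^{\neg i}(w_s^{\neg i})/F^{\neg i}(w_s)$ stays below $\delta$, which is exactly what the final bound guarantees.
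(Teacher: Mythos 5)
Your proposal follows essentially the same route as the paper's proof: the per-sample log-Lipschitz bound \eqref{eq:conv-sem} averaged over the retained samples, the bound $\|w_T\|,\|w_T^{\neg i}\|\le \eta h T\le T$ obtained by unrolling the zero-initialized iterates with $\eta\le 1/h$, the triangle inequality, and the substitution $m=\beta T^2$. The subtlety you flag about which loss normalizes the leave-one-out step size is real --- the paper dismisses it with ``similarly, $\|w_T^{\neg i}\|\le \eta h T$,'' even though the common step size $\eta/F(w_t)$ only cancels $\|\nabla F^{\neg i}(w_t^{\neg i})\|\le h F^{\neg i}(w_t^{\neg i})$ up to the very ratio $\delta$ being established --- so your fallback coupled induction is a more careful treatment of a point the paper glosses over.
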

\begin{proof}
By the log-Lipschitzness property in \eqref{eq:conv-sem} and recalling $a=1/m$,
\bea\nn
F^{\neg i}(w_T^{\neg i}) &\le F^{\neg i}(w_T)\cdot \exp\Big( \frac{R\ell}{\sqrt{m}} \| w_T^{\neg i}-w_T\| + \frac{LR^2}{m}\|w_T^{\neg i}-w_T\|^2 \Big)\\
&\le F^{\neg i}(w_T)\cdot \exp\Big( \frac{R\ell}{\sqrt{m}} (\| w_T^{\neg i}\|+\|w_T\|) + \frac{2LR^2}{m}(\|w_T^{\neg i}\|^2+\|w_T\|^2) \Big).\label{eq:propo10}
\eea
Now we note that the weight-norm can be upper bounded as following:
\begin{align*}
\|w_T\| &= \Big\|w_{T-1}-\frac{\eta}{F(w_{T-1})}\nabla F(w_{T-1})\Big\|\\
& = \Big\|w_0 - \eta \sum_{t=0}^{T-1} \frac{\nabla F(w_t)}{F(w_t)}\Big\|\\
&\le \eta \sum_{t=0}^{T-1}\Big\|\frac{\nabla F(w_t)}{F(w_t)}\Big\|\\
&\le \eta h T.
\end{align*}
Similarly, we can show that $\|w_T^{\neg i}\|\le \eta h T$. Therefore by $m=\beta T^2$ and \eqref{eq:propo10},
\begin{align*}
  F^{\neg i}(w_T^{\neg i}) &\le  F^{\neg i}(w_T)\cdot \exp\Big( \frac{R\ell}{\sqrt{m}} (\| w_T^{\neg i}\|+\|w_T\|) + \frac{2LR^2}{m}(\|w_T^{\neg i}\|^2+\|w_T\|^2) \Big)\\
  &\le F^{\neg i}(w_T)\cdot\exp\Big( \frac{2R\ell}{\sqrt{m}} (\eta h T) + \frac{4LR^2}{m} \eta^2 h^2 T^2 \Big)\\
  &\le  F^{\neg i}(w_T)\cdot\exp\Big(\frac{2R\ell}{\sqrt{\beta}} + \frac{4LR^2}{\beta}\Big),
\end{align*}
where the last step follows by $\eta h\le 1$ as per assumptions on the step-size. This completes the proof.
\end{proof}
\subsection{Proof of Corollary \ref{lem:teste}}\label{sec:appH}
First, note that if $F(w)< \delta\le 1$, then $\|w\| \ge \frac{1}{\ell R} (\log (\frac{1}{2\delta})-\sigma_0)$,  where $\sigma_0=|\sigma(0)|$, since if the lower-bound on $\|w\|$ is incorrect then, 
\bea
F(w)&= \frac{1}{n} \sum_{i=1}^n \log(1+\exp(-y_i \Phi(w,x_i))) \nn\\
&\ge \frac{1}{n}\sum_{i=1}^n \log(1+\exp(-\ell\|w\|\|x_i\|-\sigma_0)) \nn\\
&\ge \frac{1}{n} \sum_{i=1}^n \log(1+\exp(\log(2\delta)))\nn\\
&\ge \delta,\nn
\eea
In the final step, we made use of the inequality $\log(1+2\delta)\geq\delta$ for $\delta\leq 1$. Additionally, the validity of the second step relies on the Lipschitz property of the model, as demonstrated below.
\bea
y\Phi(w,x)&= \sum_{j=1}^m ya_j \sigma(\langle w_j, x\rangle) \nn\\
&\le \sum_{j=1}^m |a_j|\cdot |\sigma(\langle w_j, x\rangle)| \nn\\
&\le \sum_{j=1}^m |a_j| (\sigma_0+\ell|\langle w_j,x\rangle|)\nn\\
&\le \sigma_0 ||\tilde a||_1 + \ell\|x\|_2\sum_{j=1}^m |a_j|\cdot \|w_j\| \nn\\[4pt]
&\le \sigma_0 \|\tilde a\|_1 + \ell\|x\|_2\|\tilde a\|_2\|w\|_2 \nn
\eea
This is true due to $\ell$-Lipschitz activation and our assumption that $\|\tilde a\|_1\le m \|\tilde a\|_{\infty} = 1$, where $\tilde a\in\R^m$ is the concatenation of second layer weights. 

Now, note that due to the convergence of training loss there exists a $\tau>0$ such that at iteration $t$ the following holds:
\bea
F(w_t) \le (1-\tau)^t F(w_0).\nn
\eea
Hence the weight's norm at iteration $t$ satisfies,
\bea
\| w_t\| \ge  \frac{t}{R} \log(\frac{1}{2-2\tau}) - \frac{\sigma_0}{R} = \Theta (t).
\eea
For the test error, by defining $\mathcal{F}$ to be the set of data points labeled incorrectly by $\Phi(w_t,\cdot)$, we can write
\bea
\E_{(x,y)\sim \mathcal D}[f(y \Phi(w_t,x))] &= \lim_{n\rightarrow\infty} \frac{1}{n}\sum_{i=1}^n f(y_i \Phi(w_t,x_i))\nn \\
&\ge \lim_{n\rightarrow\infty} \frac{1}{n} \sum_{i\in\mathcal{F}} f(y_i \Phi(w_t,x_i))\nn\\
& = \lim_{n\rightarrow\infty} \frac{1}{n} \sum_{i\in \mathcal{F}} f(-|\Phi(w_t,x_i)|)\nn\\
& =\lim_{n\rightarrow\infty} \frac{1}{n} \sum_{i\in \mathcal{F}} \log(1+\exp(|\Phi(w_t,x_i)|))\nn\\
&\ge \frac{1}{3}\|w_t\|\cdot \lim_{n\rightarrow\infty} \frac{1}{n} \sum_{i\in \mathcal{F}}  \frac{|\Phi( w_t,x_i)|}{\|w_t\|}\nn\\
& \ge \frac{1}{3} \gamma \|w_t\| \E_{(x,y)\sim\mathcal{D}}[\mathbb{I}({\textsc{sign}(\Phi(w_t,x))\neq y})]\nn\\[4pt]
& = \Theta (t) \E_{(x,y)\sim\mathcal{D}}[\mathbb{I}({\textsc{sign}(\Phi(w_t,x))\neq y})]\nn
\eea

%In linear regime,
%\bea
%\E[\widetilde F(w_t)]=\E_{(x,y)}[\loG(1+\exp(-y\langle w_t,x\rangle))] &\ge \lim_{n} \frac{1}{n} \sum_{i\in C} \loG(1+\exp(-y_i \langle w_t,x_i\rangle))\\
%&\ge \lim_{n} \frac{1}{n} \sum_{i\in C} \loG(1+\exp( |\langle w_t,x_i\rangle)|)\\
%&\ge \frac{1}{2}\|w_t\|\cdot \lim_{n} \frac{1}{n} \sum_{i\in C}  \frac{|\langle w_t,x_i\rangle)|}{\|w_t\|}\\
%& \ge \frac{1}{2} \gamma \|w_t\| \E_{(x,y)}[\one_{\hat y\neq y}]\\
%& = \Theta (t) \E_{(x,y)}[\one_{\hat y\neq y}]
%\eea
Where we used the fact that $\log(1+\exp(t))\ge \frac{1}{3}t$ and the one to the last line inequality is due to Assumption \ref{ass:margin} i.e., $ \frac{| \Phi(w_t,x_i)|}{\|w_t\|} \ge \gamma$ with high probability over $(x_i,y_i)\overset{iid} {\sim}\mathcal{D}$.
% (This implies a high probability lower bound for the margin of incorrectly labeled samples. ) 
Hence the test error satisfies, 
\bea
\E[\mathbb{I}(y\neq\textsc{sign}(\Phi(w_t,x)))] = O(\frac{F(w_t)}{t}). \nn
\eea
This together with the test loss bound in Theorem \ref{lem:test} yields the statement of the corollary and completes the proof. 
\\
\\
\section{Gradient Flow}\label{sec:appJ}
\begin{proposition}[Normalized GD in continuous time]\label{propo:grad_flow}
Let the loss function $F$ satisfy self-lower boundedness of the gradient with parameter $\mu$ (Definition \ref{ass:self-lower}) and the self-bounded gradient property with parameter $h$ (Definition \ref{ass:self-bounded}). Consider normalized gradient descent with the Gradient flow differential equation given by $\frac{d}{dt} w_t= -\nabla F(w_t)/F(w_t)$. Then the training loss  at time $T$ satisfies 
\bea
F(w_0)\cdot \exp(-h^2 T)\le F(w_{_T}) \le F(w_0)\cdot \exp(-\mu^2 T). \nn
\eea
\end{proposition}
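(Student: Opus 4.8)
The plan is to track the scalar quantity $F(w_t)$ along the gradient-flow trajectory by computing its time derivative via the chain rule and then bounding it from both sides using the two self-boundedness hypotheses. Since the dynamics are $\frac{d}{dt}w_t = -\nabla F(w_t)/F(w_t)$, I would first write
\bea\nn
\frac{d}{dt}F(w_t) = \Big\langle \nabla F(w_t), \frac{d}{dt}w_t\Big\rangle = -\frac{\|\nabla F(w_t)\|^2}{F(w_t)}.
\eea
This reduces the whole problem to controlling $\|\nabla F(w_t)\|^2/F(w_t)$, which is exactly what the two assumptions provide.

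From the self-lower-boundedness (Definition \ref{ass:self-lower}), $\|\nabla F(w_t)\| \ge \mu F(w_t)$, so $\|\nabla F(w_t)\|^2/F(w_t) \ge \mu^2 F(w_t)$, giving $\frac{d}{dt}F(w_t) \le -\mu^2 F(w_t)$. Symmetrically, the self-bounded gradient (Definition \ref{ass:self-bounded}) gives $\|\nabla F(w_t)\| \le h F(w_t)$, hence $\frac{d}{dt}F(w_t) \ge -h^2 F(w_t)$. Combining,
\bea\nn
-h^2 F(w_t) \le \frac{d}{dt}F(w_t) \le -\mu^2 F(w_t).
\eea
At this point the two inequalities are each a standard scalar differential inequality of the form $\dot{y} \le -cy$ (and $\dot{y}\ge -cy$), which I would integrate directly.

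The final step is to apply Gr\"onwall's inequality (or simply divide by $F(w_t)>0$ and integrate $\frac{d}{dt}\log F(w_t)$ between $0$ and $T$). The upper bound gives $\log F(w_T) - \log F(w_0) \le -\mu^2 T$, i.e. $F(w_T) \le F(w_0)\exp(-\mu^2 T)$, and the lower bound gives $F(w_T) \ge F(w_0)\exp(-h^2 T)$, which is precisely the claimed two-sided estimate. I do not anticipate a genuine obstacle here: the main subtlety is simply ensuring $F(w_t)>0$ throughout so that division and the step-size $1/F(w_t)$ are well-defined, which holds because $F$ is a sum of exponential-type losses and is therefore strictly positive; once positivity is in hand the argument is a clean application of Gr\"onwall on a scalar quantity, in pleasant contrast to the discrete-time Theorem \ref{lem:train} where the Hessian bound was needed to control the Taylor remainder.
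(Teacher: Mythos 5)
Your proposal is correct and follows essentially the same route as the paper's proof: compute $\frac{d}{dt}F(w_t) = -\|\nabla F(w_t)\|^2/F(w_t)$, sandwich it between $-h^2 F(w_t)$ and $-\mu^2 F(w_t)$ using the two self-boundedness properties, and integrate $\frac{d}{dt}\log F(w_t)$ from $0$ to $T$. The remark about positivity of $F$ is a nice touch but otherwise there is no material difference.
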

\begin{proof}
Based on the assumptions, we have
\bea
\dot{w}_t:= \frac{d}{dt} w_t= -\frac{\nabla F(w_t)}{F(w_t)}.\nn
\eea
Then,
\bea
\frac{d}{dt} F(w_t) = \nabla F(w_t)^\top \dot{w}_t =- \frac{\|\nabla F(w_t)\|^2}{F(w_t)} \nn
\eea
By self-lower bounded property we have $\frac{d}{dt} F(w_t)  \le -\mu^2 F(w_t)$. Thus,
\bea
\frac{d}{dt} \log(F(w_t)) = \frac{\frac{d}{dt} F(w_t)}{F(w_t)} \le -\mu^2.\nn
\eea
By integrating from $t=0$ to $t=T$ one can deduce that,
\bea
\log(F(w_{_T})) - \log(F(w_0)) \le -\mu^2 T.\nn
\eea
This leads to the desired upper-bound for $F(w_{_T})$. A similar approach by using the self-bounded gradient property leads to the lower bound. This concludes the proof. 
\end{proof}

%%%%%%%%%%%%%%%%%%%%%%%%%%%%%%% SGD %%%%%%%%%%%%%%%%%%%%%%%%
\section{Proofs for Section \ref{sec:stoch}}
\subsection{On the Strong Growth Condition}\label{sec:appK}
\begin{proposition}
Under the self-bounded gradient property (Definitions \ref{ass:self-lower}-\ref{ass:self-bounded}) there exists a $\rho$ such that the strong growth condition is satisfied i.e.,
\bea\nn
\E_{z}[\|\nabla F_z(w)\|^2]\le \rho \|\nabla F(w)\|^2.
\eea
\end{proposition}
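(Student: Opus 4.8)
The plan is to sandwich the stochastic gradient norm between the two self-bounding constants. The upper self-boundedness of the gradient (Definition~\ref{ass:self-bounded}) controls $\|\nabla F_z(w)\|$ from above by the batch loss $F_z(w)$, while the self-lower boundedness (Definition~\ref{ass:self-lower}) controls $\|\nabla F(w)\|$ from below by the full loss $F(w)$; the strong-growth constant $\rho$ then emerges as $h^2/\mu^2$ up to a factor coming from relating the batch loss to the full loss.

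First I would observe that the self-boundedness constant $h$ from Lemma~\ref{lem:up-low} is sample-independent: the pointwise bound $\|\nabla_1\Phi(w,x_i)\|\le \ell R/\sqrt{m}$ used there does not depend on how many samples are averaged, so the \emph{batch} objective $F_z$ obeys the same inequality $\|\nabla F_z(w)\|\le h\,F_z(w)$. Squaring and taking expectation over the random batch $z$ gives $\E_z[\|\nabla F_z(w)\|^2]\le h^2\,\E_z[F_z(w)^2]$.

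Next I would bound the second moment $\E_z[F_z(w)^2]$ by a multiple of $F(w)^2$. The subtle point is that Jensen/Cauchy--Schwarz runs the wrong way for second moments, so one cannot simply replace $\E_z[F_z(w)^2]$ by $(\E_z F_z(w))^2$. The fix uses non-negativity of the per-sample losses: since every $f(w,z_j)\ge 0$, we have $\sum_j f(w,z_j)^2 \le \big(\sum_j f(w,z_j)\big)^2 = n^2 F(w)^2$, which yields $\E_z[F_z(w)^2]\le c_n\,F(w)^2$ for an explicit factor $c_n$ determined by $n$ and the batch size (e.g.\ $c_n=n$ for single-sample batches). Finally, I would invoke the self-lower bound $F(w)\le \|\nabla F(w)\|/\mu$ to convert $F(w)^2$ into $\|\nabla F(w)\|^2$, giving $\E_z[\|\nabla F_z(w)\|^2]\le (h^2 c_n/\mu^2)\,\|\nabla F(w)\|^2$, so the claim holds with $\rho = h^2 c_n/\mu^2$.

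The main obstacle is precisely this second-moment step: establishing $\E_z[F_z(w)^2]\le c_n F(w)^2$ rather than the trivially false reverse inequality. Resolving it through non-negativity of the losses is what forces the $n$-dependence into $\rho$, and the only remaining thing to check is that the self-boundedness constant genuinely transfers to sub-sampled objectives, which it does for the two-layer network of Lemma~\ref{lem:up-low} since the governing pointwise estimate on $\nabla_1\Phi$ is uniform over samples.
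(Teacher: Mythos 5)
Your proposal is correct and follows essentially the same route as the paper's proof: bound $\E_z[\|\nabla F_z(w)\|^2]$ by $h^2\E_z[F_z(w)^2]$ via self-boundedness of the batch gradient, use non-negativity of the per-sample losses to get $\E_z[F_z(w)^2]\le n\,F(w)^2$, and finish with the self-lower bound to obtain $\rho = h^2 n/\mu^2$. Your explicit justification of the second-moment step and of the transfer of $h$ to sub-sampled objectives simply spells out what the paper leaves implicit.
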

\begin{proof}
By the self-bounded gradient property and noting the non-negativity of $f$ we have,
\bea
\E_{z}[\|\nabla F_z(w)\|^2] &\le h^2 \E[(F_z(w))^2]\nn \\[4pt]
&\le h^2 n (F(w))^2\nn\\[4pt]
&\le \frac{h^2n}{\mu^2} \|\nabla F(w)\|^2.\nn
\eea
This completes the proof. 
\end{proof}
\subsection{Proof of Theorem \ref{thm:SNGD}}\label{sec:appe2}
Following the proof of Theorem \ref{lem:train} and noting the log-Lipschitzness and the self-bounded Hessian property we derive that,
\bea
F(w_{t+1}) &\le F(w_t) + \langle \nabla F(w_t),w_{t+1}-w_t\rangle + \frac{1}{2}HC\,F(w_t)\, \|w_{t+1}-w_t\|^2\nn\\[4pt]
&= F(w_t) - \eta_t  \langle \nabla F(w_t), \nabla F_{z_t}(w_t)\rangle + \frac{1}{2}HC \eta_t^2 F(w_t) \|\nabla F_{z_t}(w_t)\|^2\label{eq:sgdsmo}
\eea
Taking expectation with respect to $z_t$ and using self-boundedness property yields, 
\bea
\E_{z_t}[F(w_{t+1})] &\le F(w_t) - \eta_t \|\nabla F(w_t)\|^2 + \frac{1}{2}HC\eta_t^2 F(w_t)\E_{z_t}[\|\nabla F_{z_t}(w_t)\|^2]\nn\\[4pt]
&\le F(w_t) - \eta_t \|\nabla F(w_t)\|^2 + \frac{1}{2} \rho HC\eta_t^2 F(w_t) \|\nabla F(w_t)\|^2\nn\\[4pt]
&\le F(w_t) - \mu^2\eta_t (F(w_t))^2+\frac{1}{2}\rho H h^2C\eta_t^2 (F(w_t))^3\nn
\eea
Let $\eta_t = \frac{\eta}{F(w_t)}$, since $\eta \le \frac{\mu^2}{HC\rho h^2}$
\bea
\E_{z_t}[F(w_{t+1})]&\le F(w_t) (1-\eta\mu^2 + \frac{1}{2}\rho Hh^2C\eta^2)\nn\\[4pt]
&\le (1-\frac{\eta\mu^2}{2}) F(w_t).\nn
\eea
This completes the proof. 
\section{Experiments on stochastic normalized GD}\label{sec:appnum}
\begin{figure}[H]
\centering
\includegraphics[width=5.7cm,height=4.3cm]{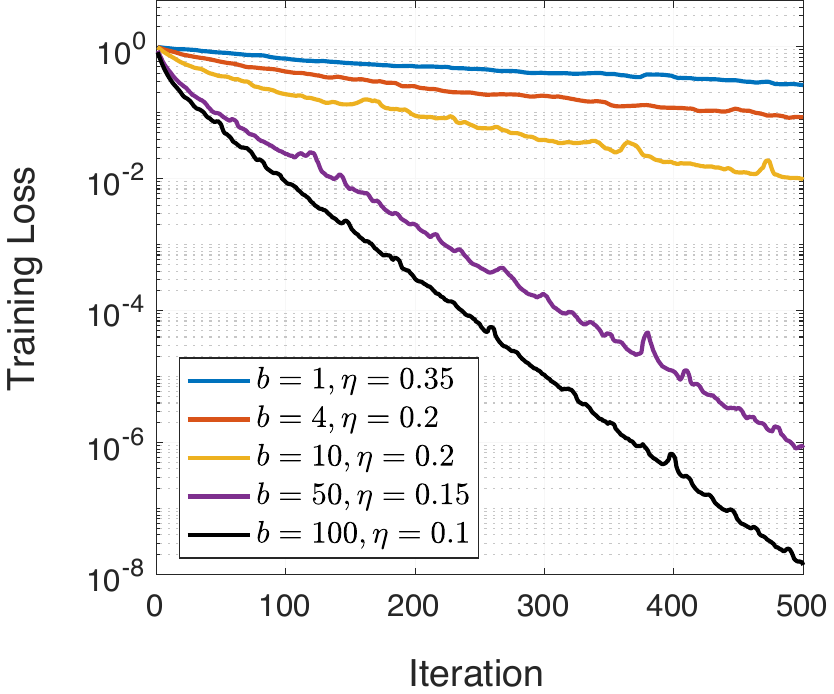}
\hspace{1in}
\includegraphics[width=5.7cm,height=4.5cm]{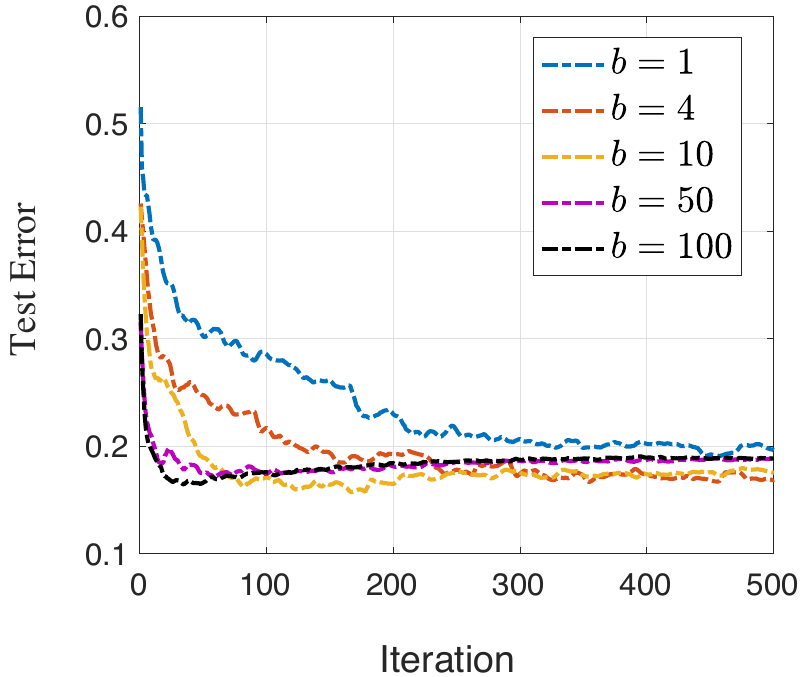}
\\[10pt]
\;\;\;\;\;\;\includegraphics[width=5.2cm,height=4.2cm]{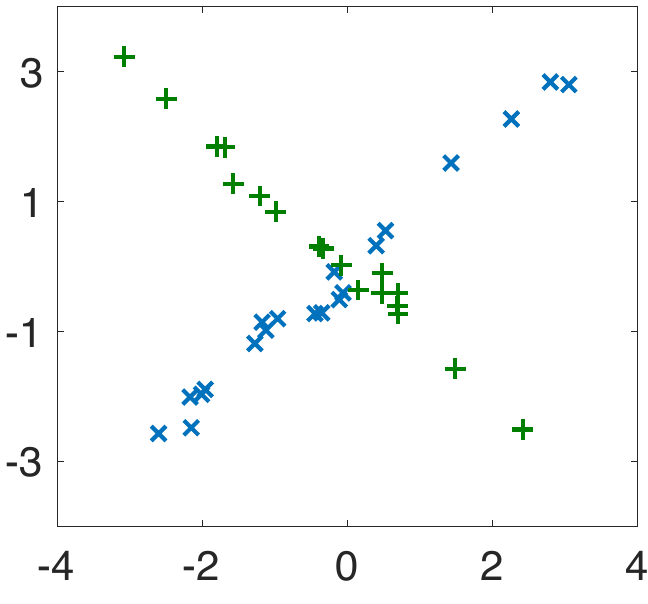}
\hspace{0.9in}
\includegraphics[width=6cm,height=4.5cm]{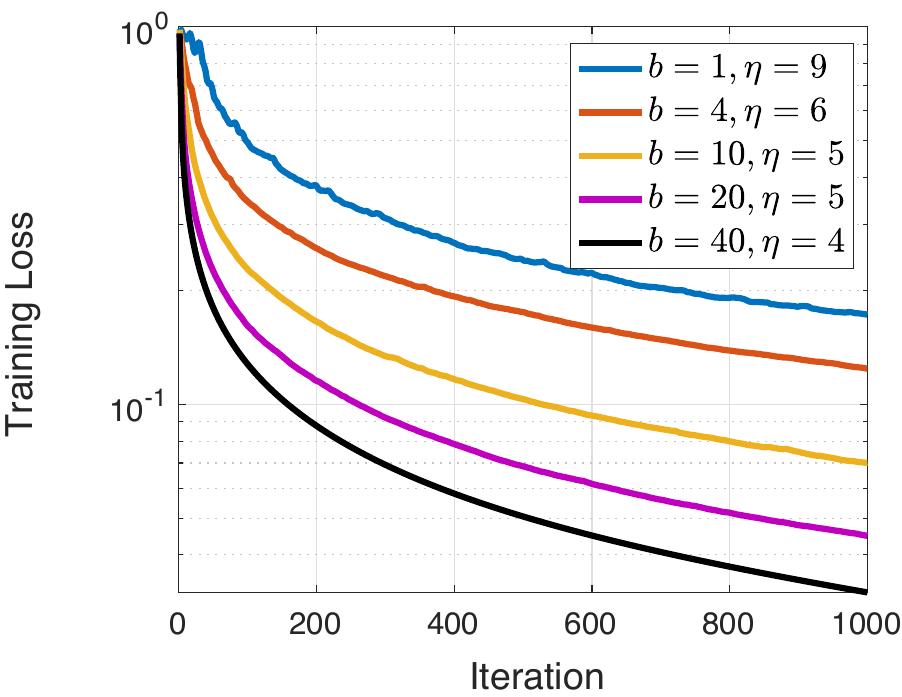}
\caption{(Top) Training loss and Test error of stochastic normalized GD (Eq.\eqref{eq:stocngd}) on linear classification with signed measurements $y=\sign(x^\top w^\star)$ with $d=50,\,n=100$. Here $`b`$ denotes the batch-size and $`\eta`$ is the fine-tuned step-size. (Bottom) Training loss of stochastic normalized GD on the dataset depicted in the left figure ($d=2,n=40$) for a two-layer neural network with $m=50$ hidden neurons.}
\label{fig:3}
\end{figure}
In this section, we evaluate the performance of stochastic normalized GD in Eq.\eqref{eq:stocngd} for linear and non-linear models. In Figure \ref{fig:3} (Top), we consider binary linear classification on signed data with the exponential loss and plot the training loss and test error performance based on iteration number. $b$ denotes the batch-size from the sample dataset size of $n=100.$ The weight vector is initialized at zero for all curves ($w_0=0_d$). The right plot shows the test error for the same setup, where the optimal test error ($\tilde F_{0-1}^\star\approx0.17$) is reached at various iteration numbers for each batch-size. In particular, for $b=10$(yellow line) stochastic normalized GD achieves the final test accuracy at almost the same time as the full-batch normalized GD (black line) while using $1/10$ th gradient computations.  Figure \ref{fig:3} (Bottom) depicts the synthetic dataset of size $n=40$ in $\R^2$ alongside with the training loss performance for each choice of batch-size $b$. Here we used a leaky-ReLU activation function as in Eq.\eqref{eq:piecewisel} with $\ell=1,\alpha=0.2$.

\end{document}